\newcommand{\expect}{\operatorname{E}\expectarg}
\DeclarePairedDelimiterX{\expectarg}[1]{[}{]}{%
  \ifnum\currentgrouptype=16 \else\begingroup\fi
  \activatebar#1
  \ifnum\currentgrouptype=16 \else\endgroup\fi
}
\newcommand{\innermid}{\nonscript\;\delimsize\vert\nonscript\;}
\newcommand{\activatebar}{%
  \begingroup\lccode`\~=`\|
  \lowercase{\endgroup\let~}\innermid 
  \mathcode`|=\string"8000
}
\newtheorem{theorem}{Theorem}
\newtheorem{corollary}{Corollary}
\newtheorem{lemma}{Lemma}
\newtheorem{assumption}{Assumption}
\newtheorem{example-non*}{Example}
\newtheorem{remark}{Remark}
\DeclareMathOperator*{\argmin}{arg\,min}
\DeclareMathOperator*{\argmax}{arg\,max}
\newcommand{\bs}{\boldsymbol}
\newcommand{\mr}{\mathrm}
\newcommand{\cem}[1]{{\color{red}(CEM:#1)}}
\newcommand{\cem}[1]{}
\newcommand{\nrev}[1]{{\color{blue}#1}}
\newcommand{\nrev}[1]{#1}
\newcommand{\orev}[1]{{\color{blue}#1}}
\newcommand{\orev}[1]{#1}
\newcommand{\rev}[1]{{\color{blue}#1}}
\newcommand{\rev}[1]{#1}
\newcommand{\com}[1]{{\color{red}(COMMENT:#1)}}
\newcommand{\com}[1]{}
\newcommand{\jinc}[1]{}
\newcommand{\sinc}[1]{{\color{black}#1}} 
\newcommand{\jinc}[1]{{\color{black}#1}} 
\newcommand{\sinc}[1]{}
\title{Adaptive Ensemble Learning with Confidence Bounds}
\author{\IEEEauthorblockN{Cem Tekin,~\IEEEmembership{Member,~IEEE}, Jinsung Yoon, Mihaela van der Schaar,~\IEEEmembership{Fellow,~IEEE}\\}
\thanks{A preliminary version of this paper will be presented at the HIAI'16 workshop at AAAI'16 \cite{tekin2016adaptive}.}
\thanks{C. Tekin is with the Department of Electrical and Electronics Engineering, Bilkent University, Ankara, Turkey, 06800. Email: cemtekin@ee.bilkent.edu.tr}
\thanks{J. Yoon and M. van der Schaar are with the Department of Electrical Engineering, UCLA, Los Angeles, CA, 90095. Email: jsyoon0823@gmail.com, mihaela@ee.ucla.edu}
\thanks{The work of C. Tekin has been supported by The Scientific and Technological Research Council of Turkey (TUBITAK) under 2232 Fellowship 116C043. The work of J. Yoon and M. van der Schaar has been supported by NSF EECS 1407712.}}
\begin{document}
\maketitle
\begin{abstract}
Extracting actionable intelligence from distributed, heterogeneous,
correlated and high-dimensional data sources requires run-time processing
and learning both locally and globally. In the last decade, a large
number of meta-learning techniques have been proposed in which local
learners make online predictions based on their locally-collected
data instances, and feed these predictions to an ensemble learner,
which fuses them and issues a global prediction. However, most of
these works do not provide performance guarantees or, when they do,
these guarantees are asymptotic. None of these existing works provide
confidence estimates about the issued predictions or rate of learning
guarantees for the ensemble learner. In this paper, we provide a systematic
ensemble learning method called Hedged Bandits, which comes with both
long run (asymptotic) and short run (rate of learning) performance
guarantees. \nrev{Moreover, our approach yields performance guarantees with respect to the optimal local prediction strategy, and is also able to adapt its predictions in a data-driven manner.} We illustrate the performance of Hedged Bandits in the
context of medical informatics \nrev{and show that it outperforms numerous online and offline ensemble learning methods.}
\end{abstract}

\begin{IEEEkeywords}
Ensemble learning, meta-learning, online learning, regret, confidence
bound, multi-armed bandits, contextual bandits, medical informatics. 
\end{IEEEkeywords}

\section{Introduction}\label{sec:intro}

Huge amounts of data streams are now being produced by more and more
sources and in increasingly diverse formats: sensor readings, physiological
measurements, GPS events, network traffic information, documents,
emails, transactions, tweets, audio files, videos etc. These streams
are then mined in real-time to provide actionable intelligence for
a variety of applications: patient monitoring \cite{simons2008consumer},
recommendation systems \cite{cao2007intelligent}, social networks
\cite{beach2008whozthat}, targeted advertisement \cite{swix2004method},
network security \cite{canzian2015timely, eskicioglu2001overview}, medical diagnosis
\cite{arsanjani2013improved} etc. Hence, online data mining algorithms
have emerged that analyze the correlated, high-dimensional and dynamic
data instances captured by one or multiple heterogeneous data sources,
extract actionable intelligence from these instances and make decisions
in real-time. To mine these data streams, the following questions
need to be answered online, for each data instance: Which processing/prediction/decision
rule should a {\em local learner} (LL) select? How should the LLs
adapt and learn their rules to maximize their performance? How should
the processing/predictions/decisions of the LLs be combined/fused
by a meta-learner to maximize the overall performance?

Existing works on meta-learning \cite{zheng2011attribute, littlestone1989weighted,freund1995desicion,canzian2015timely} have aimed to provide solutions to these
questions by designing {\em ensemble learners} (ELs) that fuse
the predictions%
\footnote{Throughout this paper the term prediction is used to denote a variety
of tasks from making predictions to taking actions.%
} made by the LLs into global predictions. A majority of the literature
treats the LLs as black box algorithms, and proposes various fusion
algorithms for the EL with the goal of issuing predictions that are
at least as good as the best LL in terms of prediction accuracy. In
some of these works, the obtained result holds for any arbitrary sequence
of data instance-label pairs, including the ones generated by an adaptive
adversary. However, the performance bounds proved for the EL in these
papers depend on the performance of the LLs. In this work, we go one
step further and study the joint design of learning algorithms for
both the LLs and the EL. 
\nrev{Our approach also differs from {\em empirical risk minimization} (ERM) based approaches \cite{vapnik1992principles, kivinen2004online}. Firstly, most of the literature on ERM is concerned with finding the best prediction rule on average. We depart from this approach and seek to find the best context-dependent prediction rule. Secondly, data is not available a priori in our model. Predictions are made on-the-fly based on the prediction rules chosen by the learning algorithm. This results in a trade-off between exploration and exploitation, which is not present in ERM.}

In this paper, we present a novel learning method which continuously
learns and adapts the parameters of both the LLs and the EL, after
each data instance, in order to achieve strong performance guarantees
- both confidence bounds and regret bounds. We call the proposed method
\emph{Hedged Bandits} (HB). The proposed system consists of a \rev{new} contextual
bandit algorithm for the LLs and \rev{two new variants of the} Hedge algorithm \cite{freund1995desicion}
for the EL. The proposed method is able to exploit the adversarial
regret guarantees of Hedge and the data-dependent regret guarantees
of the contextual bandit algorithm to derive regret
bounds for the EL. 
One proposed variant of the Hedge algorithm does not require the knowledge of time horizon $T$ and achieves the $O(\sqrt{T\log M})$ on regret uniformly over time, where $M$ is the number of
LLs. The other variant uses the context/side information provided to the EL to fuse the predictions of the LLs. 

The contributions of this paper are: 
\begin{itemize}
\item We propose two variants of the Hedge algorithm \cite{freund1995desicion}. The first variant, which is called Anytime Hedge (AH), is a parameter-free Hedge algorithm \cite{auer2002adaptive, chaudhuri2009parameter, de2014follow, cesa2006prediction}. 
We prove that AH enjoys the same order of regret as the original Hedge \cite{freund1995desicion}. The second variant, which is called Contextual Hedge (CH), is novel and uses the context information provided to the EL when fusing the LLs' predictions. Since the sequence of context arrivals to the EL are not known in advance, CH utilizes AH to learn the best LL for each context.
\item We propose a new index-based learning rule for each LL, called Instance-based Uniform Partitioning (IUP). We prove an optimal regret bound for IUP, which holds for any sequence of data instance arrivals to the LL, and hence, also in expectation.  
\item We prove confidence bounds for each LL with respect to the optimal data-dependent prediction rule of that LL. 
\item Using the regret bounds proven for each LL and the EL, we prove a regret bound for the EL with respect to the optimal data-dependent prediction rule. 
\item We numerically compare IUP, AH and CH with state-of-the-art machine learning methods in the context of medical informatics and show the superiority of the proposed methods. 
\end{itemize}

\section{Problem Description}\label{sec:ab}

This section describes the system model and introduces the notation. 
$\mr{I}(\cdot)$ is the indicator function, $\mr{E}[\cdot]$ is the expectation operator.
$\mr{E}_{P}[\cdot]$ denotes the expectation of a random variable with respect to distribution $P$. Given a set ${\cal S}$, $\Delta( {\cal S} )$ denotes the set of probability distributions over ${\cal S}$ and $|{\cal S}|$ denotes the cardinality of ${\cal S}$. 
For a scalar or vector $z(t)$ indexed by $t \in \mathbb{N}^+ := \{1,2,\ldots\}$, $\bs{z}^T := (z(1),\ldots,z(T))$. \nrev{Given a vector $\bs{v}$, $\bs{v}_{-i}$ is the vector formed by the components of $\bs{v}$ except the $i$th component.}
Random variables are denoted by uppercase letters. Realizations of random variables are denoted by lowercase letters.

The system model is given in Fig. \ref{fig:ensemble}. There
are $M$ LLs indexed by the set ${\cal M}:=\{1,2,\ldots,M\}$. Each
LL receives streams of data instances, sequentially, over discrete
time steps $t \in \{ 1,2,\ldots \}$. The instance received by LL $i$ at
time $t$ is denoted by $X_{i}(t)$. Without loss of generality, we
assume that $X_{i}(t)$ is a $d_{i}$-dimensional vector in ${\cal X}_{i}:=[0,1]^{d_{i}}$.%
\footnote{The unit hypercube is just used for notational simplicity. Our methods
can easily be generalized to arbitrary bounded, finite dimensional
data spaces, including spaces of categorical variables.%
}
\rev{Let ${\cal X} := \prod_{i \in {\cal M}} {\cal X}_i$ denote the {\em joint data instance set}.}

\begin{figure}
\begin{centering}
\includegraphics[width=1\columnwidth]{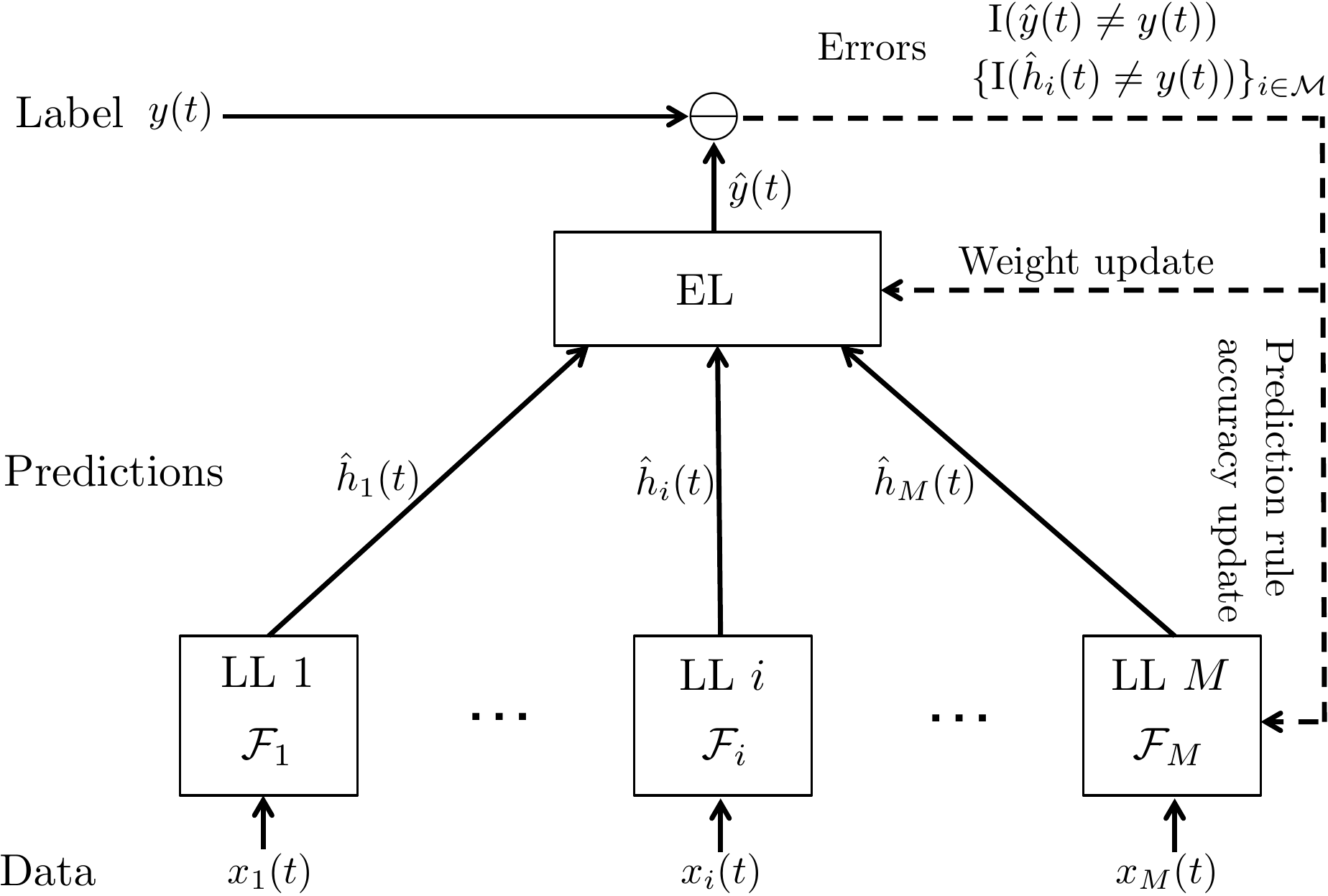}
\caption{Block diagram of the HB. The flow of information towards the EL is illustrated via a tree graph, where the LLs are the leaf nodes. After observing the instance, each LL selects one of its prediction rules to produce a prediction, and sends its prediction to the EL which makes the final prediction. Then, both
the LLs and the EL update their prediction policies based on the received
feedback $y(t)$. Note that the EL only observes the predictions $\hat{\bs{h}}(t)$ of the LLs but not their instances $\bs{x}(t)$.
}
\label{fig:ensemble} 
\end{centering}
\end{figure}

The collection of data instances at time $t$ is denoted by $\boldsymbol{X}(t)=\{X_{i}(t)\}_{i\in{\cal M}}$.
For example, $\boldsymbol{X}(t)$
can include in a medical diagnosis application real-valued features such as lab test results; discrete
features such as age and number of previous conditions; and categorical
features such as gender, smoker/non-smoker, etc. In this example each
LL corresponds to a (different) medical expert. The true label at time $t$ is denoted by $Y(t)$, which is a random variable that takes values in the finite label set ${\cal Y}$. 
Let $J$ denote the joint distribution of $( \bs{X}(t), Y(t) )$. It is assumed that $\{ ( \bs{X}(t), Y(t) ) \}_{t=1}^T$ is i.i.d. Let $J^i_{x_i}$ denote the conditional distribution of $( \bs{X}_{-i}(t), Y(t) )$ given $X_i(t) = x_i$.

The set of prediction rules of LL $i$ is denoted by ${\cal F}_{i}$.
\nrev{For instance, a prediction rule can be a classifier such as an SVM with polynomial kernel, a neural network or a decision tree.}
 Let ${\cal F} := \cup_{i \in {\cal M}} {\cal F}_i$ denote the set of all prediction rules. The prediction produced by $f \in {\cal F}_i$ given context $x_i \in {\cal X}_i$ is denoted by $Y_f(x_i)$. $Y_f(x_i)$ is a random variable whose distribution is given by $Q_f (x_i)$, where $Q_f: {\cal X}_i \rightarrow \Delta({\cal Y})$.
Prediction of $f \in {\cal F}_i$ at time $t$ is denoted by $\hat{Y}_f(t) := Y_f(X_i(t))$.
\nrev{Let $\bs{Z}(t) := ( \bs{X}(t), Y(t), \{ \hat{Y}_f(t) \}_{f \in {\cal F}} )$. Then, 
$ \{ \bs{Z}(t) \}_{t=1}^T$ is an i.i.d. sequence.}
\rev{Realizations of the random variables $X_i(t)$, $Y(t)$ and $\hat{Y}_f(t)$ are denoted by $x_i(t)$, $y(t)$ and $\hat{y}_f(t)$, respectively.}
The accuracy of prediction
rule $f\in{\cal F}_{i}$ for a data instance $x \in {\cal X}_i$ is given as 
\begin{align}
\pi_{f}(x):= \expect*{ \mr{I} ( \hat{Y}_f(t) = Y(t)) | X_{i}(t) = x  }  .  \notag
\end{align}

\rev{LL $i$ operates as follows: It first observes $x_{i}(t)$, and then selects a prediction rule $a_{i}(t)\in{\cal F}_{i}$.}
The selected prediction rule produces a prediction $\hat{h}_{i}(t) = \hat{y}_{a_{i}(t)}( t )$.\footnote{\rev{Without loss of generality we assume that only the selected prediction rule produces a prediction. For instance, in big data stream mining, the LL may be resource constrained and require to make timely predictions. The LL in this setting is constrained to activate only one of its prediction rules for each data instance. Moreover, observing the predictions of more than one prediction rule will result in faster learning. Hence, all our performance bounds will still hold when the LL observes the predictions of all of its prediction rules.}}
Then, all LLs send their predictions $\hat{\bs{h}}(t):=\{\hat{h}_{i}(t)\}_{i\in{\cal M}}$
to the EL, which combines them to produce a final prediction $\hat{y}(t)$. We assume that
the true label $y(t)$ is revealed after the final prediction,
by which the LLs and the EL can update their prediction rule selection
strategy, which is a mapping from the history of past observations,
decisions, and the current instance to the set of prediction rules.
We call $r_f(t) = \mr{I} (\hat{y}_f(t) = y(t) )$ the reward of prediction rule $f$, $v_i(t) := \mr{I} (\hat{h}_i(t) = y(t) )$ the reward of LL $i$ and $r_{\text{EL}}(t) = \mr{I} (\hat{y}(t) = y(t) )$ the reward of the EL at time $t$.
\rev{Random variables that correspond to the realizations $a_i(t)$, $\hat{h}_i(t)$, $r_f(t)$, $v_i(t)$ and $r_{\text{EL}}(t)$ are denoted by $A_i(t)$, $\hat{H}_i(t)$, $R_f(t)$, $V_i(t)$ and $R_{\text{EL}}(t)$, respectively.}

In our setup each LL is only required to observe its own data instance
and know its own prediction rules. However, the accuracy of the prediction
rules is unknown and data dependent. The EL does not know anything about the
instances and prediction rules of the LLs.\footnote{We consider the case when the EL has access to a subset of the features of the instances in Section \ref{sec:extensions}, and propose a learning algorithm for this case.}
We assume that the accuracy of a prediction rule obeys the following
H\"{o}lder rule, which represents a similarity measure between different
data instances. 
\begin{assumption}\label{ass:1} 
There exists $L>0$,
$\alpha>0$ such that for all $i\in{\cal M}$, $f\in{\cal F}_{i}$,
and $x,x'\in{\cal X}_{i}$, we have 
\begin{align*}
|\pi_{f}(x)-\pi_{f}(x')|\leq L||x-x'||^{\alpha}.
\end{align*}
\end{assumption}

\rev{We assume that $\alpha$ is known by the LLs.} Going back
to our medical informatics example, we can interpret Assumption \ref{ass:1}
as follows. If the lab tests, symptoms and
demographic information of two patients are similar, it is expected
that they have the same underlying medical condition, and hence, the (diagnosis)
prediction should be similar for these two patients. 

\section{Performance Metrics: Regret}\label{sec:regret}

In this section, we introduce several performance metrics to assess the performance of the learning algorithms of the LLs and the EL. First, we define the performance measures for the LLs. We start by defining the optimal prediction rules and {\em local oracles} (LOs) that implement these prediction rules. 
Let $f_{i}^{*}(x)$ be the optimal prediction rule of LL $i$ for an instance $x \in {\cal X}_{i}$, which is given by $f^*_i (x) \in  \argmax_{ f\in{\cal F}_{i} }  \pi_{f}(x)$. The accuracy of $f^*_i(x)$ is denoted by $\pi^*_i(x) := \pi_{f^*_i (x)}(x)$.

LO $i$ knows  $\{ \pi_{f}(\cdot) \}_{f \in {\cal F}_i}$ perfectly. At each time step $t$ it observes $x_i(t)$ and then selects $f^*_i(x_i(t))$ to make a prediction. 
Since LL $i$ does not know $\{ \pi_{f}(\cdot) \}_{f \in {\cal F}_i}$ a priori, we would like to measure how well it performs with respect to LO $i$. 
For this, we define the {\em data-dependent regret} of LL $i$ with respect to LO $i$ as 
\begin{align}
\text{Reg}_{i}(T) & :=    \sum_{t=1}^{T} R_{ f^*_i(X_i(t)) }(t)-
\sum_{t=1}^{T}  R_{ A_i(t) } (t)   . \notag 
\end{align}
The strategy of LO $i$ only depends on \rev{$\bs{X}^T_i = (X_i(1), \ldots, X_i(T))$}. Thus, we would like to measure how well LL $i$ performs given $\bs{X}^T_i$. For this, we define the {\em conditional regret} of LL $i$ as 
\begin{align}
\text{Reg}_{i}(T | \bs{X}_i^T)  &:= \expect{ \text{Reg}_{i}(T) | \bs{X}_i^T }  .   \label{eqn:experregret}
\end{align}
\rev{The algorithm we propose in Section \ref{sec:IUP} {\em almost surely} (a.s.) upper bounds the conditional regret with a deterministic sublinear function of time.}
The {\em expected regret} of LL $i$ is defined as 
\begin{align}
\overline{\text{Reg}}_i(T)  &:= \expect{ \text{Reg}_{i}(T) }  \notag \\
& =  \expect*{ \expect{ \text{Reg}_{i}(T) | \bs{X}_i^T } }
=  \expect{ \text{Reg}_{i}(T | \bs{X}_i^T) }  . \notag
\end{align}
This implies that a deterministic upper bound on $\text{Reg}_{i}(T | \bs{X}_i^T) $ that holds a.s. also holds for $\overline{\text{Reg}}_i(T)$. 

Next, we define the performance measures for the EL. 
Consider any realization $\{ \bs{v}^T_i \}_{i \in {\cal M}}$ of the random reward sequence $\{ \bs{V}^T_i \}_{i \in {\cal M}}$ of the LLs. 
The best LL for this realization is defined as $I_b$, where
$I_b \in \argmax_{ i \in {\cal M} } \sum_{t=1}^T v_i(t)$.
In Section \ref{sec:AH}, we propose a learning algorithm for the EL, whose total reward is close to the total reward of $I_b$ for any realization $\{ \bs{v}^T_i \}_{i \in {\cal M}}$. To measure the distance between total rewards, we define the {\em pseudo-regret} of the EL given $\{ \bs{v}^T_i \}_{i \in {\cal M}}$ as
\begin{align}
 \text{Reg}_{\text{EL}}(T) 
&:=  \sum_{t=1}^T v_{I_b}(t) 
- \expect*{ \sum_{t=1}^T  R_{\text{EL}}(t) }
\label{eqn:pseudoregret}
\end{align}
\rev{where the expectation is taken with respect to the randomization of the EL. In Section \ref{sec:AH} we bound $\text{Reg}_{\text{EL}}(T)$ by a sublinear function of $T$, which implies that $\lim_{T \rightarrow \infty} \text{Reg}_{\text{EL}}(T) / T = 0$.}
$\text{Reg}_{\text{EL}}(T)$ compares the performance of the EL with the best LL, which makes it a relative performance measure. \rev{This is the standard approach taken in prior works in ensemble learning \cite{freund1995desicion,auer2}.} Since LLs themselves are learning agents, $\text{Reg}_{\text{EL}}(T)$ depends on the learning algorithms used by the LLs. Next, we propose a benchmark for the performance measure of the EL that is independent of the learning algorithms used by the LLs.

The \rev{optimal} LO denoted by $i^*$, is given as
$i^{*}  \in \argmax_{i\in{\cal M}}
\expect{ \sum_{t=1}^{T}   R_{f^*_i (X_i(t))}(t) }$.
\rev{LO $i^*$'s total predictive accuracy is greatest among
all LOs.}
On the other hand, the best LL in expectation is defined as 
$i^*_b  \in \argmax_{i \in {\cal M} }  \expect*{ \sum_{t=1}^T R_{A_i(t)}(t) }$.
We would like to emphasize the fact that the expected reward of LL $i$ depends on the learning algorithm used by the LL, while the expected reward of LO $i$ is the optimal that can be achieved given the prediction rules in ${\cal F}_i$. Hence, the latter upper bounds the former. This implies that 
$\expect{ \sum_{t=1}^T R_{A_{i^*_b}(t)}(t) }
\leq \expect{ \sum_{t=1}^T R_{f^*_{i^*} (X_{i^*}(t))}(t) }$.
As an absolute measure of performance we define the {\em expected regret} of the EL as 
\begin{align}
\hspace{-0.1in} \overline{\text{Reg}}_{\text{EL}}(T) \hspace{-0.05in} :=
& \expect*{  \sum_{t=1}^{T} \hspace{-0.05in} \orev{ R_{ f^*_{i^*} ( X_{i^*}(t) )} ( t ) } }
\hspace{-0.05in} -  \expect*{ \sum_{t=1}^{T} \hspace{-0.05in}  R_{\text{EL}}(t) }  \label{eqn:ensebleregret}
\end{align}
which compares the EL with the best LO in terms of the expected reward. 

Our goal is to jointly design algorithms for the LLs and the
EL that minimize the learning loss (i.e. the growth rate of $ \overline{\text{Reg}}_{\text{EL}}(T)$).
This can be viewed equivalently as maximizing the learning speed/rate of the LL and the EL algorithms. 
We will prove in the Section \ref{sec:analysis} a sublinear upper bound on $\overline{\text{Reg}}_{\text{EL}}(T)$, meaning that the proposed algorithms have a provably fast
rate of learning, and the {\em average regret} $\overline{\text{Reg}}_{\text{EL}}(T)/T$
of the proposed algorithms converges asymptotically to $0$.
\nrev{A learning algorithm that achieves sublinear regret guarantees that (in expectation) the number of prediction errors it makes is in the order of that of the optimal LO, which knows the accuracies of the prediction rules for each instance in advance.}

\section{An Instance-based Uniform Partitioning Algorithm for the LLs}\label{sec:IUP}

Each LL uses the {\em Instance-based Uniform Partitioning} (IUP)
algorithm given in Fig. \ref{fig:CLUP}. IUP is designed to exploit
the similarity measure given in Assumption \ref{ass:1} when learning
the accuracies of the prediction rules. Basically, IUP partitions ${\cal X}_{i}$
into a finite number of equal sized, identically shaped, non-overlapping
sets, whose granularities determine the balance between approximation
accuracy and estimation accuracy: increasing the size of
a set in the partition results in more past instances falling within
that set, which positively affects the estimation accuracy, but also allows more dissimilar instances
to lie in the same set, which negatively affects the approximation
accuracy. IUP strikes this balance by adjusting the granularity of
the data space partition based on the information contained within
the similarity measure (Assumption \ref{ass:1}) and the time horizon
$T$.%
\footnote{The doubling trick \cite{cesa1997use} allows any learning algorithm $\Gamma$
  that requires the time horizon as an input to run efficiently (with the same time order of regret) without the knowledge of the time horizon. With the doubling trick, time is partitioned into multiple phases ($j=1,2,\ldots$) with doubling lengths ($T_{1},T_{2},\ldots$). For instance, if the first phase is set to last for $\hat{T}$ time steps, then the length of the $j$th phase is equal to $2^{j-1}\hat{T}$ time steps. In each phase $j$, an independent instance of the original learning algorithm $\Gamma$, denoted by $\Gamma_{j}$, is run from scratch, without using any information available from the previous phases. With the doubling trick, $\Gamma_{j}$'s time horizon input is set to $2^{j-1}\hat{T}$. When we run IUP for LL $i$
  with the doubling trick, the only modification that is needed is to set the partitioning parameter of phase $j$ to $m_{i}=\lceil(2^{j-1}\hat{T})^{1/(2\alpha+d_{i})}\rceil$.}

Let $m_{i}$ be the {\em partitioning parameter} of LL $i$, which
is used to partition $[0,1]^{d_{i}}$ into $m_{i}^{d_{i}}$
identical hypercubes. This partition is denoted by ${\cal P}_{i}$.%
\footnote{Instances laying at the edges of the hypercubes can be assigned to
one of the hypercubes in a random fashion without affecting the derived
performance bounds.%
} IUP estimates the accuracy of each prediction rule for each set (hypercube)
$p\in{\cal P}_{i}$, separately, by only using the
past history from instance arrivals that fall into hypercube $p$.
For each LL $i$, IUP keeps and updates the following parameters during
its operation: 
\begin{itemize}
\item $N_{f,p}^{i}(t)$: Number of times an instance arrived to hypercube
$p\in{\cal P}_{i}$ and prediction rule $f$ of LL $i$ is used
to make the prediction \rev{prior to time $t$}. 
\item $\hat{\pi}_{f,p}^{i}(t)$: Sample mean accuracy of prediction rule
$f \in {\cal F}_i$ \rev{at time $t$}. 
\end{itemize}
An illustration of the partitions used by IUP for each LL is given in Fig \ref{fig:IUPpartition}.
IUP strikes the balance between exploration and exploitation by keeping the following set of indices for each $p \in {\cal P}_i$ and $f \in {\cal F}_i$:\footnote{When $N^i_{f,p}(t) =0$, we set $g^i_{f,p}(t)$ to $+ \infty$.}
\begin{align}
\hspace{-0.1in} g^i_{f,p}(t) = \hat{\pi}^i_{f,p}(t) + \hspace{-0.05in} \sqrt{ \frac{2}{N^i_{f,p}(t)} 
(1 \hspace{-0.05in} + \hspace{-0.05in} 2 \log ( 2 |{\cal F}_i| m_i^{d_i} T^{\frac{3}{2}} ) ) }  .  \label{eqn:UCBindex}
\end{align}
\nrev{The second term in \eqref{eqn:UCBindex} is an inflation term that decreases with the square root of $N^i_{f,p}(t)$. The $(1+2\log(2|{\cal F}_{i}|m_{i}^{d_{i}}T^{3/2})$ term is a normalization constant that is required for the regret analysis in Theorem \ref{thm:CUPregret}. These types of indices are commonly used in online learning \cite{auer} to tradeoff exploration and exploitation.}

At the beginning of time step $t$, LL $i$ observes $x_i(t)$, and identifies the hypercube $p_i(t) \in {\cal P}_i$ that contains $x_i(t)$. Then, it selects
$a_i(t) \in \argmax_{ f \in {\cal F}_i }  g^i_{f,p_i(t)}(t)$
and predicts $\hat{h}_i(t) = \hat{y}_{a_i(t)}(t)$.  
The second term of the index reflects the uncertainty in the estimated value $\hat{\pi}^i_{f,p}(t)$. It decreases as more observations are gathered from prediction rule $f$ for data instances that lie in $p$. Hence, $g^i_{f,p}(t)$ serves as an optimistic estimate of the accuracy of $f$ for data instances in $p$. 
LL $i$ explores when $a_i(t) \notin \argmax_{f \in {\cal F}_i}  \hat{\pi}^i_{f,p_i(t)}(t)$, and exploits when $a_i(t) \in \argmax_{f \in {\cal F}_i}  \hat{\pi}^i_{f,p_i(t)}(t)$. In exploration, it chooses a prediction rule with suboptimal estimated accuracy and high uncertainty, while in exploitation it chooses the prediction rule with the highest estimated accuracy. 
\rev{In Section \ref{sec:analysis}, we will show that the choice of the index in \eqref{eqn:UCBindex} results in optimal learning.}

\begin{figure}
\begin{centering}
\includegraphics[width=0.8\columnwidth]{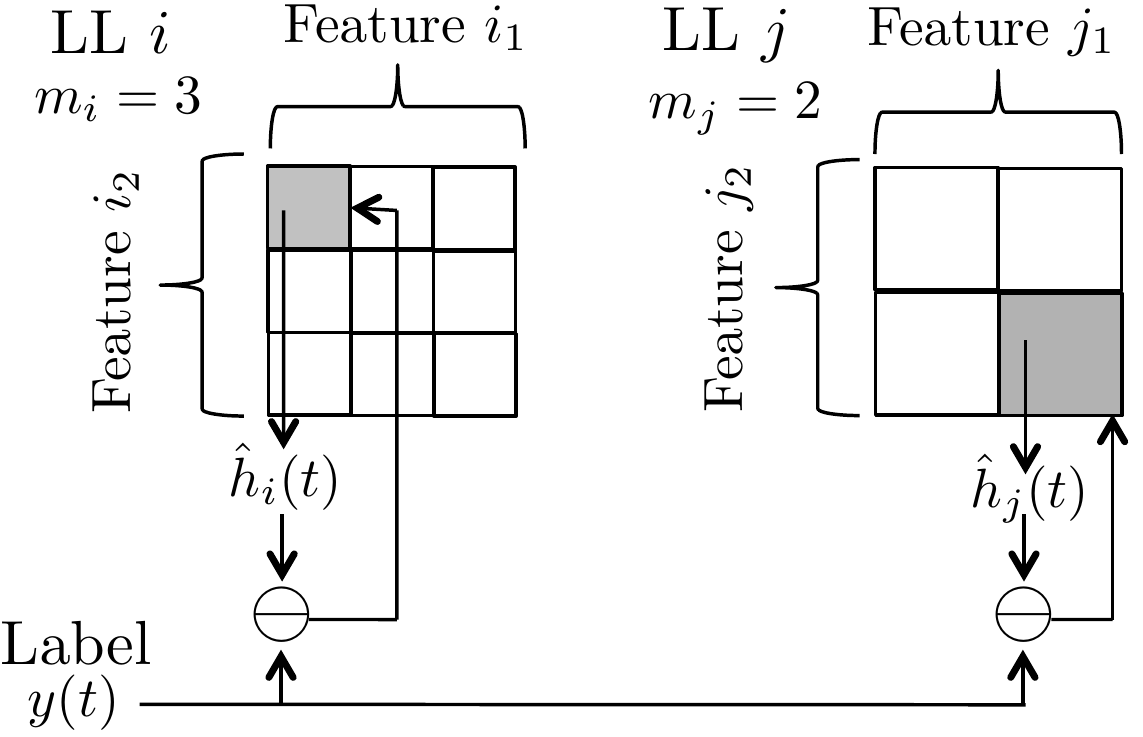}
\caption{Illustration of different partitions used by IUP for LLs $i$ and $j$. Accuracy parameter is updated for the shaded sets in the partitions, which contains the current feature vector.
\com{Change $\hat{y}_i(t)$ to $\hat{u}_i(t)$ in the figure.}
}
\label{fig:IUPpartition} 
\end{centering}
\end{figure}

\begin{figure}[htb]
\fbox{%
\begin{minipage}[c]{0.95\columnwidth}%
{\fontsize{9}{9}\selectfont \flushleft{IUP for LL $i$:} 
\begin{algorithmic}
\STATE{Input: $T$, $m_{i}$, $d_i$}
\STATE{Initialize sets: Create partition ${\cal P}_{i}$ of $[0,1]^{d_i}$ into $m_i^{d_i}$
identical hypercubes}
\STATE{Initialize counters: $N_{f,p}^{i}=0,\forall f\in{\cal F}_{i},p\in{\cal P}_{i}$, $t=1$} 
\STATE{Initialize estimates: $\hat{\pi}_{f,p}^{i} = 0$, $\forall f \in {\cal F}_{i}$,
$p\in{\cal P}_{i}$ } 
\WHILE{$t\geq1$} 
\STATE{Find the set $p^{*} = p_i(t) \in {\cal P}_{i}$ that $x_{i}(t)$ belongs to}
\STATE{Compute the index for each $f \in {\cal F}_i$:}
\FOR{ $f \in {\cal F}_i$ }
\IF{ $N^i_{f,p^*} > 0$ }
\STATE{
$\hspace{-0.2in} g^i_{f,p^*} = \hat{\pi}^i_{f,p^*} + \sqrt{\frac{2}{N^i_{f,p^*}} (1 + 2 \log ( 2 |{\cal F}_i| m_i^{d_i} T^{\frac{3}{2}} ) )  }$ }
\ELSE
\STATE{ $\hspace{-0.2in} g^i_{f,p^*} = +\infty$  }
\ENDIF
\ENDFOR
\STATE{Select $a_i = \argmax_{f \in {\cal F}_i} g^i_{f,p^*}$ (break ties randomly)}
\STATE{Predict \rev{$\hat{h}_{i}(t) = \hat{y}_{a_i}(t)$} }
\STATE{Observe the true label $y(t)$ and the reward $v_i(t)=\mathrm{I}(\hat{h}_{i}(t)=y(t))$} 
\STATE{$\hat{\pi}_{a_{i},p^{*}}^{i} \leftarrow 
( \hat{\pi}_{a_{i},p^{*}}^{i} N_{a_{i},p^{*}}^{i} + v_i(t) ) /
( N_{a_{i},p^{*}}^{i} + 1 )$}
\STATE{\rev{$N_{a_{i},p^{*}}^{i} \leftarrow N_{a_{i},p^{*}}^{i}  + 1$}} 
\STATE{\rev{$t \leftarrow t+1$}} 
\ENDWHILE
\end{algorithmic} 
} %
\end{minipage}}\protect\protect\caption{Pseudocode of IUP for LL $i$.}
\label{fig:CLUP} 
\end{figure}

\section{Anytime Hedge Algorithm for the EL}\label{sec:AH}

\nrev{In this section, we consider a parameter-free variant of the Hedge algorithm, called the {\em Anytime Hedge} (AH). Hedge \cite{freund1995desicion} is an algorithm that uses the exponential weights update rule. It achieves $O(\sqrt{T})$ regret under the prediction with expert advice model. In this model, the goal is to compete with the best expert given a pool of experts.
Hedge takes as input a parameter $\eta$, that is called the {\em learning rate}. The regret of Hedge is minimized when $\eta$ is carefully selected according to the time horizon $T$.
} 

Unlike the original Hedge,
AH does not require a priori knowledge of the time horizon. 
The EL uses AH to produce the final prediction $\hat{y}(t)$.%
\footnote{We decided to use AH as the ensemble learning algorithm due to
its simplicity and regret guarantees. In practice, AH can be replaced
with other ensemble learning algorithms. For instance, we also evaluate
the performance when LLs use IUP and the EL uses Weighted Majority
(WM) algorithm \cite{littlestone1989weighted} in the numerical results section. Unlike AH, WM uses $q_i(t)$ as the weight of the prediction of LL $i$. It sets the weight of $y \in {\cal Y}$ to be $w_y(t) = \sum_{i: \hat{h}_i(t) = y} q_i(t)$ and predicts $\hat{y}(t) \in \argmax_{y \in {\cal Y}} w_y(t)$. 
}
\rev{Although, numerous parameter-free variants of Hedge are introduced in prior works \cite{auer2002adaptive,chaudhuri2009parameter,de2014follow,cesa2006prediction}, to the best of our knowledge the regret analysis for AH is new. Specifically, in Theorem 2.3 of \cite{cesa2006prediction}, regret bound for a parameter-free Exponentially Weighted Average Forecaster is derived. However, it is assumed that (i) the prediction of the EL is a deterministic weighted average of the predictions of the LLs, and (ii) the space of predictions and the loss functions are convex. In contrast to this, in our setting (i) the prediction of the EL is probabilistic, and (ii) the space of prediction is a finite set ${\cal Y}$ and the loss functions $\mr{I} (\hat{h}_{i}(t)\neq y(t))$ and $\mr{I}(\hat{y}(t)\neq y(t))$ are indicator functions.}

\begin{figure}[htb]
\fbox{%
\begin{minipage}[c]{0.95\columnwidth}%
{\fontsize{9}{9}\selectfont \flushleft{Anytime Hedge (AH)} 
\begin{algorithmic}
\STATE{Input: A non-increasing sequence of positive real numbers $\{ \eta(t) \}_{t \in \mathbb{N}^+}$}
\STATE{Initialization: $L_{i}(0) = 0$ for $i \in {\cal M}$, $t=1$}
\WHILE{$t\geq1$} 
\STATE{Receive predictions of LLs: $\hat{\bs{h}}(t)$}
\STATE{Choose the LL $I(t)$ to follow according to the distribution $\bs{q}(t) := (q_1(t),\ldots, q_M(t))$ where
\begin{align}
q_{i}(t) = \frac{ \exp( -\eta(t) L_{i}(t-1) )} { \sum_{j=1}^M \exp ( - \eta(t) L_{j}(t-1) ) }      \notag
\end{align}
}
\STATE{Predict $\hat{y}(t) = \hat{h}_{I(t)}(t)$}
\STATE{Observe the true label $y(t)$}
\STATE{Receive the reward $r_{\text{EL}}(t) = \mr{I} (\hat{y}(t) = y(t))$ and observe losses of all LLs: $l_{i}(t) := \mr{I} (\hat{h}_i(t) \neq y(t))$ for $i \in {\cal M}$}
\STATE{Set $L_{i}(t) = L_{i}(t-1) + l_{i}(t)$}
\STATE{$t \leftarrow t+1$} 
\ENDWHILE
\end{algorithmic} 
} %
\end{minipage}}\protect\protect\caption{Pseudocode of AH.}
\label{fig:Ahedge} 
\end{figure}

AH keeps a cumulative loss/error vector $\bs{L}(t)=(L_{1}(t),\ldots,L_{M}(t))$,
where $L_{i}(t)$ denotes the number of prediction errors made by LL $i$ by the end of time step $t$. After observing
$\hat{\bs{h}}(t)$, AH samples its final prediction $\hat{Y}(t)$ from this set
according to probability distribution $\bs{q}(t) = (q_1(t), \ldots, q_M(t))$, where 
\begin{align}
\mr{Pr}(\hat{Y}(t)=\hat{h}_{i}(t)) = q_i(t)  
& =\frac{ \exp( -\eta(t) L_{i}(t-1) )} { \sum_{j=1}^M \exp ( - \eta(t) L_{j}(t-1) ) }    \notag
\end{align}
where $\{ \eta(t) \}_{t \in \mathbb{N}^+}$ is a positive non-increasing sequence. 
This implies that AH will choose the LLs with smaller cumulative error with
higher probability.

\section{Analysis of the Regret}\label{sec:analysis}

In this section we prove bounds on the regrets given in 
(\ref{eqn:experregret}) and (\ref{eqn:ensebleregret}), when the LLs
use IUP and the EL uses AH as their algorithms. The following theorem
bounds the regret of each LL.

\begin{theorem} \label{thm:CUPregret} \textbf{Regret bounds for LL $i$.}
When LL $i$ uses IUP with the partitioning parameter $m_{i} \in \mathbb{N}^+$,
given $\boldsymbol{X}^T_i = \boldsymbol{x}^T_i$ we have 
\begin{align}
\text{Reg}_{i}(T | \bs{X}_i^T = \boldsymbol{x}^T_i) & \leq 1+ 2 L d^{\alpha/2}_i m^{-\alpha}_i  T +  |{\cal F}_i| m_i^{d_i} \notag \\
&+ 2 A_{m_i}  \sqrt{ |{\cal F}_i| m_i^{d_i} T } . \label{eqn:boundedmemory}
\end{align}
Specifically, when $m_{i}=\lceil T^{1/(2\alpha+d_{i})}\rceil$, we have
\begin{align}
\text{Reg}_{i}(T | \bs{X}_i^T = \boldsymbol{x}^T_i) & \leq T^{\frac{\alpha+d_{i}}{2\alpha+d_{i}}}
C_{i} + T^{\frac{d_{i}}{2\alpha+d_{i}}}  2^{d_i} | {\cal F}_i | + 1  \label{eqn:conditionedregret}
\end{align}
where 
$C_{i} =2 A_{m_i} |{\cal F}_i|^{1/2} 2^{d_i/2} + 2 L d_i^{\alpha/2} $ and
$A_{m_i}  = 2 \sqrt{ 2 (1 + 2 \log ( 2 |{\cal F}_i| m_i^{d_i} T^{\frac{3}{2}} ) )  }$.
From \eqref{eqn:conditionedregret}, it immediately follows that 
\begin{align}
\text{Reg}_{i}(T | \bs{X}_i^T )  & \leq T^{\frac{\alpha+d_{i}}{2\alpha+d_{i}}}
C_{i} + T^{\frac{d_{i}}{2\alpha+d_{i}}}  2^{d_i} | {\cal F}_i | + 1  \text{ a.s.} \notag \\
\overline{\text{Reg}}_{i}(T)  & \leq T^{\frac{\alpha+d_{i}}{2\alpha+d_{i}}}
C_{i} + T^{\frac{d_{i}}{2\alpha+d_{i}}}  2^{d_i} | {\cal F}_i | + 1 . \notag
\end{align}
\end{theorem}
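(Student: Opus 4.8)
The plan is to condition throughout on the event $\{\bs{X}_i^T = \bs{x}_i^T\}$, which makes the sequence of visited hypercubes deterministic, and to split the conditional regret into an \emph{approximation} error caused by the coarseness of the partition ${\cal P}_i$ and an \emph{estimation} error caused by learning the accuracies. Writing $p_i(t)$ for the hypercube containing $x_i(t)$ and using $\expect{R_f(s) | X_i(s)=x_i(s)}=\pi_f(x_i(s))$, the conditional regret equals $\sum_{t=1}^T\big(\pi^*_i(x_i(t))-\expect{\pi_{A_i(t)}(x_i(t)) | \bs{x}_i^T}\big)$, where the remaining expectation is only over the algorithm's randomness induced by the reward realizations. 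It therefore suffices to bound the pointwise gap $\pi^*_i(x_i(t))-\pi_{A_i(t)}(x_i(t))$ per step.

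First I would establish the concentration bound underlying the index \eqref{eqn:UCBindex}. For a fixed rule $f$ and hypercube $p$, let $\tilde\pi^{i}_{f,p}(t)$ denote the average of the true accuracies $\pi_f(x_i(s))$ over the past steps $s<t$ at which $f$ was selected inside $p$. Since $\{\bs{Z}(t)\}_t$ is i.i.d., $\hat\pi^{i}_{f,p}(t)$ is an average of independent $[0,1]$-valued rewards with mean $\tilde\pi^{i}_{f,p}(t)$. The one subtlety is that the number of these rewards, $N^{i}_{f,p}(t)$, is itself random and history-dependent; I would handle this in the standard way, applying Hoeffding's inequality to each fixed partial sum of $n=1,\dots,T$ rewards and taking a union bound over $n$, over the $|{\cal F}_i|m_i^{d_i}$ rule--hypercube pairs, and over time. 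The normalization constant $1+2\log(2|{\cal F}_i|m_i^{d_i}T^{3/2})$ in \eqref{eqn:UCBindex} is tuned precisely so that the probability of the ``bad'' event $\{\exists\, f,p,t:\ |\hat\pi^{i}_{f,p}(t)-\tilde\pi^{i}_{f,p}(t)|>\beta^{i}_{f,p}(t)\}$, where $\beta^{i}_{f,p}(t)$ is the second term of \eqref{eqn:UCBindex}, is of order $1/T$; as each step contributes regret at most $1$, the bad event contributes at most the constant term $1$.

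On the complementary ``good'' event I would derive the per-step bound. Any two instances in the same hypercube lie at distance at most $\sqrt{d_i}/m_i$, so Assumption~\ref{ass:1} gives $|\pi_f(x)-\pi_f(x')|\le Ld_i^{\alpha/2}m_i^{-\alpha}$ for all $f$ and all $x,x'\in p$, and hence $\tilde\pi^{i}_{f,p}(t)$ differs from the pointwise value $\pi_f(x_i(t))$ by at most $Ld_i^{\alpha/2}m_i^{-\alpha}$. With $p=p_i(t)$ and $f=A_i(t)$, the selection rule gives $g^{i}_{f,p}(t)\ge g^{i}_{f^*_i(x_i(t)),p}(t)$; lower-bounding the right-hand side by $\tilde\pi^{i}_{f^*_i(x_i(t)),p}(t)\ge\pi^*_i(x_i(t))-Ld_i^{\alpha/2}m_i^{-\alpha}$ (good event plus H\"older) and upper-bounding the left-hand side by $\tilde\pi^{i}_{f,p}(t)+2\beta^{i}_{f,p}(t)\le\pi_{A_i(t)}(x_i(t))+Ld_i^{\alpha/2}m_i^{-\alpha}+2\beta^{i}_{f,p}(t)$ yields
\begin{align}
\pi^*_i(x_i(t))-\pi_{A_i(t)}(x_i(t)) \leq 2Ld_i^{\alpha/2}m_i^{-\alpha}+2\beta^{i}_{A_i(t),p_i(t)}(t). \notag
\end{align}
Summing the first term over $t$ gives the approximation term $2Ld_i^{\alpha/2}m_i^{-\alpha}T$. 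For the second, the first selection of each rule in each hypercube is treated separately (there $N^{i}_{f,p}(t)=0$ and the index is $+\infty$): there are at most $|{\cal F}_i|m_i^{d_i}$ such steps, each contributing regret at most $1$. For all later selections $2\beta^{i}_{f,p}(t)=A_{m_i}/\sqrt{N^{i}_{f,p}(t)}$, so summing $\sum_{n\ge1}n^{-1/2}\le2\sqrt{N}$ over the pulls of each pair and applying the Cauchy--Schwarz inequality over the $|{\cal F}_i|m_i^{d_i}$ pairs (the final counts summing to $T$) produces $2A_{m_i}\sqrt{|{\cal F}_i|m_i^{d_i}T}$. Collecting the four contributions gives \eqref{eqn:boundedmemory}.

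Finally, the specialization \eqref{eqn:conditionedregret} follows by substituting $m_i=\lceil T^{1/(2\alpha+d_i)}\rceil$ and using $T^{1/(2\alpha+d_i)}\le m_i\le2T^{1/(2\alpha+d_i)}$, which balances the approximation term $m_i^{-\alpha}T$ and the estimation term $\sqrt{m_i^{d_i}T}$ at the common order $T^{(\alpha+d_i)/(2\alpha+d_i)}$; the remaining algebra is routine. Since the resulting bound is a deterministic function of $T$ holding for \emph{every} realization $\bs{x}_i^T$, it holds for $\text{Reg}_i(T | \bs{X}_i^T)$ almost surely and, upon taking expectations, for $\overline{\text{Reg}}_i(T)$. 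I expect the main obstacle to be the concentration step: because $N^{i}_{f,p}(t)$ is a random, history-dependent count, Hoeffding cannot be applied to the observed sum directly, and care is needed both to justify the union bound over counter values and to verify that the $T^{3/2}$ normalization makes the failure contribution a genuine constant.
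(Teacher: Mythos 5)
Your proposal reproduces the paper's overall architecture almost exactly: conditioning on $\bs{X}_i^T=\bs{x}_i^T$ to reduce the regret to a sum of accuracy gaps, a good/bad event split in which the confidence-failure event of probability $O(1/T)$ contributes the additive constant $1$, an optimism argument giving the per-step bound $2Ld_i^{\alpha/2}m_i^{-\alpha}+2\beta^i_{A_i(t),p_i(t)}(t)$ on the good event (the paper's Lemma \ref{lemma:indexdiff}), the $|{\cal F}_i|m_i^{d_i}$ term from first pulls, the $\sum_{n}n^{-1/2}\le 2\sqrt{N}$ bound plus Cauchy--Schwarz summation (the paper's Lemma \ref{lemma:secondterm}, which does Cauchy--Schwarz per hypercube and then again over hypercubes, whereas you do it once over all $(f,p)$ pairs --- an immaterial difference), and the same substitution of $m_i=\lceil T^{1/(2\alpha+d_i)}\rceil$ with $m_i\le 2T^{1/(2\alpha+d_i)}$.

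The one place where you genuinely diverge --- and where your plan is not yet a proof --- is the concentration step, which is exactly where the paper invests its technical novelty (its Lemma \ref{lemma:firstterm}). Your claim that $\hat\pi^i_{f,p}(t)$ ``is an average of independent $[0,1]$-valued rewards with mean $\tilde\pi^i_{f,p}(t)$'' is not correct as written: because the accuracies $\pi_f(x_i(s))$ differ across instances inside a hypercube, and because the set of steps at which $f$ is selected is determined adaptively by past rewards, the composition of ``the first $n$ observations of $(f,p)$'' is history-dependent. The reward-stack device that justifies Hoeffding plus a union bound over deterministic counts $n$ for i.i.d.\ arms does not transfer to this heterogeneous-mean setting, and you flag this yourself as the main obstacle without resolving it. Two repairs exist. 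The paper's repair is the ``sandwich technique'': couple the observed rewards with two auxiliary i.i.d.\ sequences $\overline{R}_{f,p}(t)=\mr{I}(\eta_t\le\overline{\pi}_{f,p})$ and $\underline{R}_{f,p}(t)=\mr{I}(\eta_t\le\underline{\pi}_{f,p})$ through the same uniforms $\eta_t$, so that the empirical mean is a.s.\ sandwiched between two i.i.d.\ empirical means; the self-normalized concentration inequality of Appendix \ref{app:concentration} (which is designed for random, adaptively grown sample counts) is then applied to those two sequences, and Assumption \ref{ass:1} relates $\overline{\pi}_{f,p}$ and $\underline{\pi}_{f,p}$ to $\pi_f(\tilde x_p(t))$ within $v=L(\sqrt{d_i}/m_i)^{\alpha}$. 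The repair matching your drifting-mean formulation is to note that the centered partial sums $S_n=\sum_{j=1}^n\bigl(R_f(\tau_j)-\pi_f(x_i(\tau_j))\bigr)$ along the selection times $\tau_j$ form a martingale with increments in $[-1,1]$, so Azuma--Hoeffding gives the same exponential tail for each fixed $n$, and the union bound over $n\le T$, $f\in{\cal F}_i$ and $p\in{\cal P}_i$ goes through with the stated constants (indeed with room to spare, given the $T^{3/2}$ inside the logarithm). Either repair completes your argument and yields \eqref{eqn:boundedmemory} exactly as you outline; without one of them, the bound on the probability of the unconfident event --- the key lemma of the whole theorem --- is unproven.
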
 
\begin{proof}
See Appendix \ref{app:theorem1}.
\end{proof}

Theorem \ref{thm:CUPregret} states that the difference
between the expected number of correct predictions made by LO $i$
and IUP increases as a sublinear
function of the sample size $T$. Time order of the terms that appear in \eqref{eqn:boundedmemory} are balanced when $m_{i}=\lceil T^{1/(2\alpha+d_{i})}\rceil$.
This means that the average excess
prediction error of IUP compared to the \rev{optimal policy} converges to zero as the number of data instances grows (approaches
infinity). The regret bound enables us to
exactly calculate how far IUP is from the optimal strategy for any
finite $T$, in terms of the average number of correct predictions.
Basically, we have 
$\overline{\text{Reg}}_{i}(T) /T = \tilde{O} ( T^{ -\frac{\alpha}{2\alpha+d_{i}} } )$. \rev{Moreover the rate of growth of the regret, which is  $\tilde{O} ( T^{\frac{\alpha+d_{i}}{2\alpha+d_{i}}} )$ is optimal \cite{lu2010contextual} (up to a logarithmic factor), i.e., there exists no other learning algorithm that can achieve a smaller rate of growth of the regret.} 

\begin{remark}\label{rem:boundedmemory}
The memory complexity of IUP is $O(  |{\cal F}_i| m^{d_i}_i)$. For $m_{i}=\lceil T^{1/(2\alpha+d_{i})}\rceil$, it becomes $O( |{\cal F}_i| T^{d_i/(2\alpha+d_i)})$. For memory bounded LLs, with a bound $M_i \in \mathbb{N}^+$ on the partitioning parameter, we can set 
$m_{i}= \min \{  \lceil T^{1/(2\alpha+d_{i})}\rceil, M_i \}$. In this case, LL $i$ will incur sublinear regret when $\lceil T^{1/(2\alpha+d_{i})}\rceil \leq M_i$. Otherwise, the regret may not be sublinear. However, we can still obtain an approximation guarantee for IUP, since 
$\lim_{T \rightarrow \infty} \text{Reg}_{i}(T | \bs{X}_i^T ) / T = 2 L d^{\alpha/2}_i m^{-\alpha}_i$. 
This implies that IUP's average reward will be within $2 L d^{\alpha/2}_i M^{-\alpha}_i$ of the average reward of LO $i$. 
\end{remark}

\begin{remark}\label{rem:parametereffect}
Time order of the regret decreases as $\alpha$ increases (\rev{given that $T > d_i^{\alpha+d_i/2}$} holds. Otherwise, the bound given in Theorem \ref{thm:CUPregret} becomes trivial). This can be observed by investigating Assumption \ref{ass:1}. Given two instances $x$ and $x'$ and a prediction rule $f$, as $\alpha$ increases, difference between the prediction accuracies of $f$ for two instances $x$ and $x'$ that lie in the same set of the partition decreases. The constant that multiplies the time order of the regret increases as $L$ increases. This holds because as $L$ increases, the difference between prediction accuracies of $f$ for $x$ and $x'$ may become larger. 
\end{remark}

As a corollary of the above theorem, we have the following {\em
confidence bound} on the accuracy of the predictions of LL $i$ made
by using IUP.

\begin{corollary} \label{cor:confidencebound} \textbf{Confidence
bound for LL $i$.} Assume that LL $i$ uses IUP with the value of the partitioning parameter $m_i$ given in Theorem \ref{thm:CUPregret}. Let $\text{ACC}_{i,\epsilon}(t)$
be the event that the prediction rule chosen by IUP for LL $i$ at
time $t$ has accuracy greater than or equal to $\pi^*_i(x_i(t))-\epsilon$.
For any time $t$, we have
$\mr{Pr}(\text{ACC}_{i,\epsilon_{t}}(t))  \geq 1 -  1/T$,
where 
$\epsilon_t = 
\sqrt{ \frac{8}{N^i_{a_i(t),p_i(t)} (t)} (1 + 2 \log ( 2 |{\cal F}_i| m_i^{d_i} T^{\frac{3}{2}} ) )   }  
+ 2 L d_i^{\alpha/2} T^{\frac{-\alpha}{2\alpha+d_i}} $.
\end{corollary}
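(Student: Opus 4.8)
The plan is to establish a single high-probability \textbf{good event} on which the selected rule is provably near-optimal, and then verify the claimed failure probability and the claimed value of $\epsilon_t$ separately. Write $p := p_i(t)$, $a := a_i(t)$, $f^* := f^*_i(x_i(t))$, abbreviate $B := 2|{\cal F}_i| m_i^{d_i} T^{3/2}$ and $c^i_{f,p}(t) := \sqrt{(2/N^i_{f,p}(t))(1 + 2\log B)}$, so that the IUP index reads $g^i_{f,p}(t) = \hat{\pi}^i_{f,p}(t) + c^i_{f,p}(t)$. For each cell $p$ and rule $f$, let $\bar{\mu}^i_{f,p}(t)$ denote the average of the true accuracies $\pi_f(\cdot)$ evaluated at the past instances that fell in $p$ and were predicted by $f$; this is exactly the conditional mean of the reward average $\hat{\pi}^i_{f,p}(t)$.

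First I would define the good event $G$ as the event that $|\hat{\pi}^i_{f,p}(t) - \bar{\mu}^i_{f,p}(t)| \leq c^i_{f,p}(t)$ holds simultaneously for every $p \in {\cal P}_i$ and $f \in {\cal F}_i$. This is the concentration event already used in the proof of Theorem \ref{thm:CUPregret}, so I would reuse that machinery: since the per-instance rewards are independent and bounded in $[0,1]$ with conditional means $\pi_f(\cdot)$, a Hoeffding/Azuma bound gives, for each fixed count value $n$, a two-sided deviation probability of at most $2\exp(-4(1 + 2\log B)) = 2 e^{-4} B^{-8}$, which is independent of $n$ because the confidence width scales as $n^{-1/2}$. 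Taking a union bound over the at most $T$ possible values of $N^i_{f,p}(t)$, over the $|{\cal F}_i|$ rules and over the $m_i^{d_i}$ cells shows that $\mr{Pr}(G^c)$ lies far below $1/T$; in particular $\mr{Pr}(G) \geq 1 - 1/T$.

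Next, on $G$ I would run the deterministic chain. Assumption \ref{ass:1} together with the cell diameter $\sqrt{d_i}/m_i$ gives, for every $f$, $|\bar{\mu}^i_{f,p}(t) - \pi_f(x_i(t))| \leq L d_i^{\alpha/2} m_i^{-\alpha} \leq L d_i^{\alpha/2} T^{-\alpha/(2\alpha+d_i)} =: \beta$, using $m_i \geq T^{1/(2\alpha+d_i)}$. Applying this to $a$ and combining with the lower confidence bound on $G$ yields $\pi_a(x_i(t)) \geq \bar{\mu}^i_{a,p}(t) - \beta \geq \hat{\pi}^i_{a,p}(t) - c^i_{a,p}(t) - \beta = g^i_{a,p}(t) - 2 c^i_{a,p}(t) - \beta$. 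The IUP selection rule gives $g^i_{a,p}(t) \geq g^i_{f^*,p}(t)$, and on $G$ the upper confidence bound gives $g^i_{f^*,p}(t) = \hat{\pi}^i_{f^*,p}(t) + c^i_{f^*,p}(t) \geq \bar{\mu}^i_{f^*,p}(t) \geq \pi_{f^*}(x_i(t)) - \beta = \pi^*_i(x_i(t)) - \beta$. Chaining, $\pi_a(x_i(t)) \geq \pi^*_i(x_i(t)) - 2 c^i_{a,p}(t) - 2\beta$, and since $2 c^i_{a,p}(t) = \sqrt{(8/N^i_{a,p}(t))(1 + 2\log B)}$ and $2\beta = 2 L d_i^{\alpha/2} T^{-\alpha/(2\alpha+d_i)}$ are precisely the two summands of $\epsilon_t$, this is exactly the event $\text{ACC}_{i,\epsilon_t}(t)$. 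Hence $\mr{Pr}(\text{ACC}_{i,\epsilon_t}(t)) \geq \mr{Pr}(G) \geq 1 - 1/T$.

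The main obstacle is the concentration step rather than the algebra: because the number of samples $N^i_{f,p}(t)$ in each cell is itself random and data-dependent (the algorithm decides when to invoke each rule), one cannot apply Hoeffding to a fixed collection of i.i.d. variables directly. The clean way around this is to view the rewards of rule $f$ in cell $p$ as a fixed sequence indexed by the pull count $n$, apply the concentration inequality for each $n$, and union bound over $n$ — which is exactly what the $T^{3/2}$ factor inside the logarithm in the index is engineered to absorb. The remaining care is to apply the H\"older bound in both directions (once to pass from $\bar{\mu}^i_{f^*,p}$ down to $\pi^*_i(x_i(t))$ and once to pass from $\pi_a(x_i(t))$ down to $\bar{\mu}^i_{a,p}$), which is what produces the factor of $2$ in front of $L d_i^{\alpha/2} T^{-\alpha/(2\alpha+d_i)}$.
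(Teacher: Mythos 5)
Your proof is correct and reaches the stated bound, but the key step---concentration---is handled by a genuinely different technique from the paper's. The overall skeleton is shared: both arguments build a confident event of probability at least $1-1/T$ and then run the same deterministic chain (the selected rule's index dominates $f^*$'s index; the H\"{o}lder bound is applied once for $a_i(t)$ and once for $f^*$), producing exactly the width $2c^i_{a,p}(t)$ plus $2Ld_i^{\alpha/2}T^{-\alpha/(2\alpha+d_i)}$; this chain is precisely the paper's Lemma \ref{lemma:indexdiff}. Where you differ is in how the confident event is established. Because the rewards collected in a cell have non-identical means $\pi_f(\tilde{x}_p(\cdot))$, the paper's Lemma \ref{lemma:firstterm} introduces a \emph{sandwich technique}---coupling the realized rewards through uniform random variables with two i.i.d.\ sequences whose means are $\sup_{x\in p}\pi_f(x)$ and $\inf_{x\in p}\pi_f(x)$---and then invokes the self-normalized inequality of Appendix \ref{app:concentration}, which is valid uniformly over the random pull count, so no union bound over counts is needed. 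You instead keep the heterogeneous rewards, center each at its own conditional mean (so the deviation $\hat{\pi}^i_{f,p}-\bar{\mu}^i_{f,p}$ is an average of bounded martingale differences), and pay for the randomness of $N^i_{f,p}(t)$ with a union bound over $n\le T$, which the $T^{3/2}$ inside the logarithm absorbs with a large margin. Both routes are sound and both give failure probability far below $1/T$; yours is more elementary and self-contained, while the paper's lets it quote an off-the-shelf inequality and reuse the event $\text{UC}(v)$ verbatim from Theorem \ref{thm:CUPregret}, which is why its published proof of this corollary consists of only two citations and one line of algebra.

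Two cautions on your write-up. First, your good event $G$ is \emph{not} ``exactly the concentration event already used in the proof of Theorem \ref{thm:CUPregret}'': the paper's event $\text{UC}(v)$ is phrased around $\pi_f(\tilde{x}_p(t))$ lying in a $v$-inflated confidence interval and is proved by the sandwich argument, so $G$ is your own construction and must be (as you indeed do) proved independently rather than imported. Second, the phrase ``apply Hoeffding for each fixed $n$'' needs the martingale/optional-skipping formulation you allude to, because which rewards get averaged at pull count $n$ is decided adaptively by past data---plain Hoeffding for a fixed collection of independent variables does not apply; Azuma--Hoeffding for range-one martingale differences does, and it yields exactly the exponent $\exp(-4(1+2\log B))$ you use, so the constants in your union bound survive the correction.
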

\begin{proof}
See Appendix \ref{app:corollary1}.
\end{proof}

Corollary 1 gives a confidence bound on the predictions made by IUP
for each LL. This guarantees that the prediction made by IUP is very
close to the prediction of the best prediction rule that can be selected
given the instance. For instance, in medical informatics, the result
of this corollary can be used to calculate the patient sample size
required to achieve a desired level of confidence in the predictions
of the LLs. For instance, for every $(\epsilon,\delta)$ pair, we
can calculate the minimum number of patients $N^{*}$ such that, for
every new patient $n>N^{*}$, IUP will not choose any prediction rule
that has suboptimality greater than $\epsilon>0$ with probability
at least $1-\delta$, when it exploits (To achieve this we need to set the second term in \eqref{eqn:UCBindex} appropriately). Moreover, Corollary
1 can also be used to determine the number of patients that need to
be enrolled in a clinical trial to achieve a desired level of confidence
on the effectiveness of a drug. 

The theorem below bounds the pseudo-regret of AH for any realization of LLs' rewards, hence almost surely. 

\begin{theorem} \label{thm:Ahedgeregret}
When AH is run with \rev{learning parameter} $\eta(t) = \sqrt{ \log M / t}$, for any reward sequence $\{ \bs{v}^T_i \}_{i \in {\cal M}}$, the pseudo-regret of the EL with respect to the best LL is bounded by
$\text{Reg}_{\text{EL}}(T) \leq 2 \sqrt{T \log M}$.
Hence, we have
$\max_{i \in {\cal M}} \sum_{t=1}^T V_i(t) 
-  \mr{E}\left[ \sum_{t=1}^T  R_{\text{EL}}(t) \right]     
 \leq 2 \sqrt{T \log M} \text{ a.s.}$,
where the expectation is taken with respect to the randomization of the EL.
\end{theorem}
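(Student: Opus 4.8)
The plan is to reduce the pseudo-regret to a standard exponential-weights loss regret and then run a potential argument adapted to the decreasing learning rate. First I would rewrite $\text{Reg}_{\text{EL}}(T)$ in terms of the losses $l_i(t)$. Since $v_i(t) = 1 - l_i(t)$ and $I_b \in \argmin_{i} L_i(T)$, and since $\mr{E}[R_{\text{EL}}(t) \mid \text{history}] = \sum_{i} q_i(t) v_i(t)$ by the sampling rule $\mr{Pr}(\hat{Y}(t) = \hat{h}_i(t)) = q_i(t)$, a direct cancellation of the $T$ terms gives $\text{Reg}_{\text{EL}}(T) = \sum_{t=1}^T \hat{l}(t) - \min_{i \in {\cal M}} L_i(T)$, where $\hat{l}(t) := \sum_{i} q_i(t) l_i(t)$ is the conditionally expected loss of the EL. This is purely a statement about the fixed loss realization, so proving a deterministic bound on the right-hand side establishes both the pseudo-regret bound and the a.s.\ claim simultaneously.

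Next I would introduce the log-partition potential $\Psi_t(\eta) := \frac{1}{\eta} \log \sum_{i \in {\cal M}} \exp(-\eta L_i(t))$ and establish a per-step inequality. Writing $q_i(t) = \exp(-\eta(t) L_i(t-1)) / \sum_j \exp(-\eta(t) L_j(t-1))$ and applying $e^{-x} \leq 1 - x + x^2/2$ for $x \geq 0$ to $\sum_i q_i(t) e^{-\eta(t) l_i(t)}$, together with $l_i(t)^2 \leq l_i(t) \leq 1$, yields $\log \sum_i q_i(t) e^{-\eta(t) l_i(t)} \leq -\eta(t)\hat{l}(t) + \eta(t)^2/2$. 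Recognizing that $\sum_i q_i(t) e^{-\eta(t) l_i(t)}$ is exactly the ratio of consecutive partition sums evaluated at learning rate $\eta(t)$, this rearranges to $\hat{l}(t) \leq \Psi_{t-1}(\eta(t)) - \Psi_t(\eta(t)) + \eta(t)/2$.

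Summing over $t$, the obstacle is that the potentials are evaluated at a moving learning rate, so the sum does not telescope cleanly; I would regroup it as $\Psi_0(\eta(1)) - \Psi_T(\eta(T)) + \sum_{t=1}^{T-1}\left(\Psi_t(\eta(t+1)) - \Psi_t(\eta(t))\right)$. The step I expect to be the main obstacle is controlling these correction terms. I would prove the key monotonicity lemma that $\frac{1}{\eta}\log\left(\frac1M \sum_i e^{-\eta L_i(t)}\right)$ is non-decreasing in $\eta$, which follows from the convexity of the cumulant generating function $\eta \mapsto \log(\frac1M\sum_i e^{-\eta L_i(t)})$ and the tangent inequality at $\eta = 0$. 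Since $\eta(\cdot)$ is non-increasing, this normalized potential moves in the favorable direction, leaving only $\Psi_t(\eta(t+1)) - \Psi_t(\eta(t)) \leq \log M\,(1/\eta(t+1) - 1/\eta(t))$, which telescopes to $\log M\,(1/\eta(T) - 1/\eta(1))$.

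Finally I would evaluate the boundary terms: $\Psi_0(\eta(1)) = \log M / \eta(1)$ since $L_i(0)=0$, and $\Psi_T(\eta(T)) \geq -\min_i L_i(T)$ since the minimizing term alone lower-bounds the sum. Combining everything cancels the $\log M/\eta(1)$ contributions and yields $\text{Reg}_{\text{EL}}(T) \leq \log M/\eta(T) + \frac12\sum_{t=1}^T \eta(t)$. Substituting $\eta(t) = \sqrt{\log M / t}$ gives $\log M/\eta(T) = \sqrt{T\log M}$ and, using $\sum_{t=1}^T t^{-1/2} \leq 2\sqrt{T}$, $\frac12\sum_t\eta(t) \leq \sqrt{T\log M}$, for a total of $2\sqrt{T\log M}$, as claimed.
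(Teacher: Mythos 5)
Your proof is correct and follows essentially the same route as the paper's: the same per-step bound obtained from $e^{-x}\le 1-x+x^2/2$ and $\log(1+x)\le x$, the same Abel-summation regrouping to handle the decreasing learning rate (the paper's property P1), and the same monotonicity of the normalized potential in $\eta$ (the paper's P2, which the paper cites from \cite{bubeck2010jeux} while you prove it directly via convexity of the cumulant generating function and the chord-slope inequality). The only cosmetic difference is that you work with the unnormalized potential $\Psi_t$ and track the $\log M/\eta$ boundary terms explicitly, whereas the paper normalizes by $1/M$ so that $\Phi_0\equiv 0$; the final accounting and the resulting bound $\log M/\eta(T)+\tfrac12\sum_{t=1}^T\eta(t)\le 2\sqrt{T\log M}$ are identical.
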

\begin{proof}
See Appendix \ref{app:hedgeproof}.
\end{proof}

The next theorem shows that the expected regret of the EL given in \eqref{eqn:ensebleregret} grows
sublinearly over time and the term with the highest regret
order scales with $F_{\max} = \max_{i \in {\cal M}} |{\cal F}_i|$ rather than $|{\cal F}|$, which is the sum of the number
of prediction rules of all the LLs.

\begin{theorem} \label{thm:regretensemble} \textbf{Regret bound
for the EL} When the EL runs AH with learning parameter
$\eta(t) = \sqrt{ \log M / t  }$ and all LLs run IUP with the partitioning parameter given in Theorem \ref{thm:CUPregret},
the expected regret of the EL with respect to the \rev{best LO $i^*$} is bounded by 
\begin{align*}
\text{Reg}_{\text{EL}}(T) &\leq T^{ \frac{ \alpha+d_{i^*} }{ 2\alpha+d_{i^*}  } } C_{i^*} 
+ T^{ \frac{ d_{i^*} }{ 2\alpha+d_{i^*} } } 2^{d_{i^*}} | {\cal F}_{i^*} | \notag \\
&+ 2 \sqrt{T \log M} + 1 
\end{align*}
where the definition of $C_{i^*}$ is given in Theorem \ref{thm:CUPregret}. 
\end{theorem}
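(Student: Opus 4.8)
The plan is to bound the expected regret $\overline{\text{Reg}}_{\text{EL}}(T)$ of \eqref{eqn:ensebleregret} by chaining the two guarantees already proved---Theorem \ref{thm:Ahedgeregret} for AH at the EL and Theorem \ref{thm:CUPregret} for IUP at the LLs---through the single intermediate quantity $\expect*{ \sum_{t=1}^T R_{A_{i^*}(t)}(t) }$, the expected reward accrued by LL $i^*$ while it runs IUP. The intuition is a three-link bridge: the EL is nearly as good as the \emph{best} LL (AH's pseudo-regret), the best LL is at least as good as LL $i^*$ (trivially, since a maximum dominates any single term), and LL $i^*$ running IUP is nearly as good as its own oracle LO $i^*$ (IUP's regret), which by the definition of $i^*$ is exactly the benchmark appearing in \eqref{eqn:ensebleregret}.

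First I would take the outer expectation in the almost-sure statement of Theorem \ref{thm:Ahedgeregret}. That bound, $\max_{i \in {\cal M}} \sum_{t=1}^T V_i(t) - \expect*{ \sum_{t=1}^T R_{\text{EL}}(t) } \leq 2\sqrt{T\log M}$, holds for every realization of the LL reward sequences, with the inner expectation taken only over the EL's internal randomization; averaging over the instances, labels, and LL randomization therefore preserves it and gives $\expect*{ \max_{i \in {\cal M}} \sum_{t=1}^T V_i(t) } - \expect*{ \sum_{t=1}^T R_{\text{EL}}(t) } \leq 2\sqrt{T\log M}$. Next, since $\max_{i \in {\cal M}} \sum_{t=1}^T V_i(t) \geq \sum_{t=1}^T V_{i^*}(t) = \sum_{t=1}^T R_{A_{i^*}(t)}(t)$ holds pointwise, this inequality survives expectation and yields $\expect*{ \sum_{t=1}^T R_{A_{i^*}(t)}(t) } - \expect*{ \sum_{t=1}^T R_{\text{EL}}(t) } \leq 2\sqrt{T\log M}$.

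Then I would invoke Theorem \ref{thm:CUPregret} with $i = i^*$, which states $\overline{\text{Reg}}_{i^*}(T) = \expect*{ \sum_{t=1}^T R_{f^*_{i^*}(X_{i^*}(t))}(t) } - \expect*{ \sum_{t=1}^T R_{A_{i^*}(t)}(t) } \leq T^{(\alpha+d_{i^*})/(2\alpha+d_{i^*})} C_{i^*} + T^{d_{i^*}/(2\alpha+d_{i^*})} 2^{d_{i^*}} |{\cal F}_{i^*}| + 1$. Adding this inequality to the one from the previous step cancels the common term $\expect*{ \sum_{t=1}^T R_{A_{i^*}(t)}(t) }$, leaving exactly $\expect*{ \sum_{t=1}^T R_{f^*_{i^*}(X_{i^*}(t))}(t) } - \expect*{ \sum_{t=1}^T R_{\text{EL}}(t) }$ on the left---which is $\overline{\text{Reg}}_{\text{EL}}(T)$ by \eqref{eqn:ensebleregret}---dominated by the sum of the two right-hand sides, which is precisely the claimed bound.

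The step requiring the most care is the passage from AH's pseudo-regret (which fixes each LL reward realization and averages only over the EL's coin flips) to a statement in full expectation, together with the replacement of the realization-dependent benchmark ``best LL'' by the algorithm-independent benchmark LO $i^*$. This replacement is not circular: I never need the maximum to be attained at $i^*$, only the trivial domination $\max_{i} \sum_t V_i(t) \geq \sum_t V_{i^*}(t)$, and it is IUP's data-dependent guarantee for LL $i^*$ that then relates $\sum_t R_{A_{i^*}(t)}(t)$ to the oracle reward $\sum_t R_{f^*_{i^*}(X_{i^*}(t))}(t)$. Because $i^*$ is defined as $\argmax_{i} \expect*{ \sum_t R_{f^*_i(X_i(t))}(t) }$, the resulting benchmark is the strongest possible oracle, so no sharper comparison is sacrificed.
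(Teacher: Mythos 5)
Your proposal is correct and follows essentially the same route as the paper's proof: both pass the almost-sure AH pseudo-regret bound through expectation, use the pointwise domination $\max_{i}\sum_{t} V_i(t) \geq \sum_{t} V_{i^*}(t) = \sum_{t} R_{A_{i^*}(t)}(t)$ to replace the best-LL benchmark by LL $i^*$, and then add the IUP regret bound for LL $i^*$ (Theorem \ref{thm:CUPregret}) so that the intermediate term $\expect*{\sum_{t=1}^T R_{A_{i^*}(t)}(t)}$ cancels. The paper phrases this as a two-term decomposition of $\overline{\text{Reg}}_{\text{EL}}(T)$ rather than as adding two inequalities, but the argument is identical.
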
 
\begin{proof}
See Appendix \ref{app:theorem2}.
\end{proof}

Theorem \ref{thm:regretensemble} proves that the highest time order
of the regret does not depend on $M$, since $C_{i^*}$ only depends on $|{\cal F}_{i^*}| \leq F_{\max}$ but not on $|\cup_{i \in {\cal M}} {\cal F}_i|$. This implies that the effect of the number of LLs to the learning rate is negligible. 
Since regret is measured
with respect to the optimal data-dependent prediction strategy of the best LL (identical to the best LO),
the benchmark will generally improve as LLs with higher performances
are added to the system. Moreover, the learning loss with respect to the benchmark is only slightly affected by introducing new LLs to the system. Therefore, the performance of the EL will generally improve as LLs with higher performances are added to the system.

\section{Extensions}\label{sec:extensions}

\textbf{Active EL}: Since IUP selects a prediction rule with high uncertainty when it explores, the prediction accuracy of an LL can be low when it explores. 
Since the EL combines the predictions of the LLs, taking
into account the prediction of an LL which explores can reduce the
prediction accuracy of the EL. In order to overcome this limitation, we propose
the following modification: Let ${\cal A}(t)\subset{\cal M}$ be the
set of LLs that exploit at time $t$. If ${\cal A}(t)=\emptyset$,
the EL will randomly choose one of the LLs' prediction as its final
prediction. Otherwise, the EL will apply an ensemble learning algorithm
(such as AH or WM) using only the LLs in ${\cal A}(t)$. This means
that only the predictions of the LLs in ${\cal A}(t)$ will be used
by the EL and only the weights of the LLs in ${\cal A}(t)$ will be
updated by the EL. Our numerical results illustrate that such a modification
can indeed result in an accuracy that is much higher than the accuracy of
the best LL.

\textbf{Contextual EL (CEL):} The predictive accuracy of the EL can be further improved if it can observe a set of contexts that yields additional information about the accuracies of LLs' prediction rules. 
For instance, these contexts can be a subset of the data instances that LLs observe, or some other side observation about the instance that the EL currently examines.

We assume that CEL can observe $d_{\text{EL}}$-dimensional context in addition to the predictions of the LLs. Let $x_\text{EL}(t)$ be the context observed at time $t$ by the EL, which is an element of ${\cal X}_{\text{EL}} = [0,1]^{d_{\text{EL}}}$. The learning algorithm we propose for CEL is called {\em Contextual Hedge} (CH).
Similar to IUP, CH partitions the context space into equal sized, identically shaped, non-overlapping sets, and learns a different LL selection rule for each set in the partition. With this modification, the EL can learn the best LL for each set in the partition, which will yield a higher predictive accuracy than learning the best LL only based on the number of correct predictions. 

The pseudocode of the CH is given in Fig. \ref{fig:FBEL}. CH runs a different instance of the AH in each set $p$ of its context space partition ${\cal P}_{ \text{EL} }$. The cumulative loss vector it keeps for $p$ at time $t$ is denoted by $\bs{L}_p(t) = (L_{p,1}(t), \ldots, L_{p,M}(t))$, where $L_{p,i}(t)$ denotes the number of prediction errors made by LL $i$ by the end of time step $t$ for contexts that arrived to $p$. \rev{$N_{\text{EL}, p}(t)$  denotes the number of context arrivals to $p$ by the end of time $t$.}
At the beginning of time step $t$, CH identifies the set in ${\cal P}$ that $x_\text{EL}(t)$ belongs to, which is denoted by $p_\text{EL}(t)$. After CH receives the set of predictions $\hat{\bs{h}}(t)$ of the LLs, it samples its final prediction from this set according to probability distribution $\bs{q}(t) = (q_1(t), \ldots, q_M(t))$, where 
\begin{align}
& \mr{Pr}(\hat{Y}(t)=\hat{h}_{i}(t)) = q_{i}(t)  \notag \\
& =\frac{ \exp( -\eta( N_{\text{EL}, p_\text{EL}(t)}(t) ) L_{p_\text{EL}(t),i}( t - 1      ) )} 
{ \sum_{j=1}^M \exp ( - \eta( N_{\text{EL}, p_\text{EL}(t)}(t) ) L_{p_\text{EL}(t),j}( t -1  ) ) } .   \notag
\end{align}
Standard Hedge algorithm is not suitable in this setting because it requires the knowledge of $N_{\text{EL}, p}(T)$ beforehand for each $p \in {\cal P}_{ \text{EL} }$. However, AH works properly because it can update its learning parameter $\eta(\cdot)$ on-the-fly for each $p \in {\cal P}_{ \text{EL} }$, using the most recent value of $N_{\text{EL}, p}(t)$. 
Let ${\cal Z}_p(t) := \{ l \leq t: x_{\text{EL}}(l) \in p \}$ denote the set of times in which the context is in $p$ by time $t$. 
For a given sequence of LL rewards $\{ \bs{v}^T_i \}_{i \in {\cal M}}$ and context arrivals $\bs{x}^T_{\text{EL}}$
we define the best LL for set $p \in {\cal P}_{\text{EL}}$ of the EL as
\begin{align}
i^*_p \in \argmax_{i \in {\cal M}} \sum_{l \in {\cal Z}_p(T)}  v_i(l) .     \notag
\end{align}
The \rev{contextual pseudo-regret} of CEL is defined as 
\begin{align}
\text{Reg}_{\text{CEL}}(T) := \sum_{t=1}^T v_{ i^*_{ p_{\text{EL}}(t) } }(t) 
- \mr{E} \left[ \sum_{t=1}^T R_{\text{EL} }(t)\right]  \label{eqn:newregret}
\end{align}
where the expectation is taken with respect to the randomization of CH.
The following theorem bounds the regret of CH based on the granularity of the partition it creates.

\begin{theorem} \label{thm:CHregret}
\textbf{Regret bound
for CH.} When CEL runs CH with learning parameter
$\eta(t) = \sqrt{ \log M / t}$ and partitioning parameter $m_\text{EL}$,
the contextual pseudo-regret of the CEL is bounded by
\begin{align}
\text{Reg}_{\text{CEL}}(T) \leq  2 \sqrt{ T ( m_{\text{EL}} )^{d_{\text{EL}}} \log M  }  \notag
\end{align}
for any $( \{ \bs{v}^T_i \}_{i \in {\cal M}}, \bs{x}^T_{\text{EL}})$.
\end{theorem}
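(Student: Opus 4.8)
The plan is to reduce the contextual problem to the $(m_{\text{EL}})^{d_{\text{EL}}} = |{\cal P}_{\text{EL}}|$ non-contextual problems already settled by Theorem \ref{thm:Ahedgeregret}, and then to aggregate the per-set bounds with Cauchy--Schwarz. The starting observation is structural: CH maintains a separate cumulative-loss vector $\bs{L}_p$ for each set $p \in {\cal P}_{\text{EL}}$ and, on a round $t$ with $p_{\text{EL}}(t) = p$, samples $\hat{Y}(t)$ using only $\bs{L}_p(t-1)$ together with the learning rate $\eta(N_{\text{EL},p}(t))$. Since $N_{\text{EL},p}(t)$ counts exactly the arrivals to $p$ up to $t$, the rounds in ${\cal Z}_p(T)$, listed in increasing order, are fed to an instance of AH whose internal clock runs over $1,2,\ldots,N_{\text{EL},p}(T)$ and whose learning rate is $\eta(\tau)=\sqrt{\log M/\tau}$ at local step $\tau$. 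In other words, restricted to $p$, CH \emph{is} AH run on the reward subsequence $\{ v_i(l) \}_{l \in {\cal Z}_p(T)}$, $i \in {\cal M}$.

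Next I would decompose the regret along the partition. Because $i^*_{p_{\text{EL}}(t)} = i^*_p$ for every $t \in {\cal Z}_p(T)$, linearity of expectation gives
\begin{align}
\text{Reg}_{\text{CEL}}(T) = \sum_{p \in {\cal P}_{\text{EL}}} \left( \sum_{l \in {\cal Z}_p(T)} v_{i^*_p}(l) - \mr{E}\left[ \sum_{l \in {\cal Z}_p(T)} R_{\text{EL}}(l) \right] \right) . \notag
\end{align}
The reward subsequence seen in $p$ is deterministic (the theorem fixes $\{ \bs{v}^T_i \}_{i \in {\cal M}}$ and $\bs{x}^T_{\text{EL}}$), the benchmark $i^*_p$ is by definition the best LL for that subsequence, and the only randomness remaining in the $p$th summand comes from CH's own sampling of $I(\cdot)$. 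Hence each parenthesized term is precisely the pseudo-regret of AH on a reward sequence of length $N_{\text{EL},p}(T)$, so Theorem \ref{thm:Ahedgeregret} bounds it by $2\sqrt{N_{\text{EL},p}(T)\,\log M}$.

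Summing these bounds and writing $T_p := N_{\text{EL},p}(T)$, so that $\sum_{p} T_p = T$ and $|{\cal P}_{\text{EL}}| = (m_{\text{EL}})^{d_{\text{EL}}}$, Cauchy--Schwarz yields
\begin{align}
\text{Reg}_{\text{CEL}}(T) &\leq 2\sqrt{\log M} \sum_{p \in {\cal P}_{\text{EL}}} \sqrt{T_p} \notag \\
&\leq 2\sqrt{\log M}\, \sqrt{|{\cal P}_{\text{EL}}|}\, \sqrt{\textstyle\sum_{p} T_p} \notag \\
&= 2\sqrt{ T\, (m_{\text{EL}})^{d_{\text{EL}}}\, \log M } , \notag
\end{align}
which is the claimed bound. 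The only genuine obstacle is making the local reduction of the first paragraph airtight: one must verify that CH's use of the \emph{local} count $N_{\text{EL},p}(t)$ in the learning rate makes each per-set process coincide exactly with AH on a fresh horizon of length $T_p$. This is precisely where the anytime property is essential — standard Hedge would require $T_p$ in advance, which is unknown because the context arrival pattern is arbitrary, whereas AH updates $\eta(\cdot)$ on the fly and so its guarantee transfers verbatim to each set. Everything else (the per-set invocation of Theorem \ref{thm:Ahedgeregret} and the Cauchy--Schwarz aggregation) is routine.
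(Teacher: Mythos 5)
Your proposal is correct and follows essentially the same route as the paper's proof: decompose the contextual pseudo-regret over the sets of ${\cal P}_{\text{EL}}$, apply Theorem \ref{thm:Ahedgeregret} to the AH instance running on each set, and aggregate via Cauchy--Schwarz using $\sum_{p \in {\cal P}_{\text{EL}}} N_{\text{EL},p}(T) = T$. Your additional care in justifying the per-set reduction (that the anytime property of AH is what makes the guarantee transfer when $N_{\text{EL},p}(T)$ is unknown in advance) is a point the paper states in Section \ref{sec:extensions} rather than inside the proof, but it is the same argument.
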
 
\begin{proof}
See Appendix \ref{app:CHregret}.
\end{proof}
The regret bound given in Theorem \ref{thm:CHregret} is obtained without making any distributional assumptions on data instance and context arrivals. Given a fixed time horizon $T$, this regret bound increases at rate $m_{\text{EL}}^{d_{\text{EL}}/2}$. Since the trivial regret bound $\text{Reg}_{\text{CEL}}(T) \leq T$ always holds, the bound in Theorem \ref{thm:CHregret} guarantees that the regret is sublinear only if $m_{\text{EL}} < (T / ( 4 \log M ))^{1/d_{\text{EL}}}$.
It might seem counter-intuitive that the regret is minimized when $m_{\text{EL}}=1$. The reason for this is that our benchmark $\sum_{t=1}^T v_{ i^*_{ p_{\text{EL}}(t) } }(t) $ given in the left-hand side of \eqref{eqn:newregret} reduces to the benchmark $\max_{i \in {\cal M}} \sum_{t=1}^T v_i(t)$ given in \eqref{eqn:pseudoregret} when $m_{\text{EL}}=1$. The next lemma shows that the reward of the benchmark in \eqref{eqn:newregret} is non-decreasing in $m_{\text{EL}}$ when $m_{\text{EL}}$ is chosen from $\{1,2,4,8,\ldots\}$.

\begin{lemma} \label{lemma:improvedbenchmark}
Consider $m'$ and $m$ in $\{1,2,4,8,\ldots\}$ such that $m'> m$. Let ${\cal P}'$ (${\cal P}$) be the partition of ${\cal X}_{\text{EL}}$ formed by $m'$ ($m$). Let $p'(t)$ ($p(t)$) denote the set in ${\cal P}'$ (${\cal P}$) that $x_{\text{EL}}(t)$ belongs to. 
For any $( \{ \bs{v}^T_i \}_{i \in {\cal M}}, \bs{x}^T_{\text{EL}})$, we have 
\begin{align}
\sum_{t=1}^T v_{ i^*_{ p'(t) } }(t) \geq  \sum_{t=1}^T v_{ i^*_{ p(t) } } (t)  .       \notag
\end{align}
\end{lemma}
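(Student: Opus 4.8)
The plan is to exploit the fact that restricting $m'$ and $m$ to powers of two forces $\mathcal{P}'$ to be a \emph{refinement} of $\mathcal{P}$: when $m' = 2^{a}$ and $m = 2^{b}$ with $a > b$, each one-dimensional interval $[(k-1)/m, k/m)$ is split into $m'/m$ equal subintervals by the finer grid, so every hypercube $p \in \mathcal{P}$ is a disjoint union of hypercubes $p' \in \mathcal{P}'$ with $p' \subseteq p$. Establishing this nesting is the one place where the power-of-two hypothesis is actually used; without it the two grids need not be aligned and the claim can fail. I would therefore state the refinement property carefully (including the alignment of cell boundaries) as the first step, since it is both the crux of the argument and the reason the hypothesis $m', m \in \{1,2,4,8,\ldots\}$ cannot be dropped.

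Next I would rewrite each benchmark as a sum over the cells of its own partition. Grouping the time steps by the cell they fall into, and using the definition $i^*_p \in \argmax_i \sum_{l \in \mathcal{Z}_p(T)} v_i(l)$, gives
\[
\sum_{t=1}^T v_{i^*_{p'(t)}}(t) = \sum_{p' \in \mathcal{P}'} \max_{i \in \mathcal{M}} \sum_{l \in \mathcal{Z}_{p'}(T)} v_i(l), \qquad \sum_{t=1}^T v_{i^*_{p(t)}}(t) = \sum_{p \in \mathcal{P}} \max_{i \in \mathcal{M}} \sum_{l \in \mathcal{Z}_{p}(T)} v_i(l).
\]

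Then, for each coarse cell $p \in \mathcal{P}$, I would use the refinement to write $\mathcal{Z}_p(T)$ as the disjoint union $\bigcup_{p' \subseteq p} \mathcal{Z}_{p'}(T)$ over the cells $p' \in \mathcal{P}'$ contained in $p$. The core estimate is the elementary ``max of a sum is at most the sum of maxima'': fixing the maximizer $i^*_p$ and distributing its reward across the subcells,
\[
\max_{i} \sum_{l \in \mathcal{Z}_p(T)} v_i(l) = \sum_{p' \subseteq p} \sum_{l \in \mathcal{Z}_{p'}(T)} v_{i^*_p}(l) \leq \sum_{p' \subseteq p} \max_{i} \sum_{l \in \mathcal{Z}_{p'}(T)} v_i(l) .
\]
Summing this over all $p \in \mathcal{P}$ and recognizing that the cells $\{p' : p' \subseteq p,\ p \in \mathcal{P}\}$ range over all of $\mathcal{P}'$ exactly once yields $\sum_{p \in \mathcal{P}} \max_i \sum_{l \in \mathcal{Z}_p(T)} v_i(l) \leq \sum_{p' \in \mathcal{P}'} \max_i \sum_{l \in \mathcal{Z}_{p'}(T)} v_i(l)$, which, combined with the two identities above, is exactly the claimed inequality.

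The argument involves no distributional assumptions and holds for every fixed realization $(\{\bs{v}^T_i\}_{i \in {\cal M}}, \bs{x}^T_{\text{EL}})$, as required by the statement. The only genuine obstacle is the refinement step; everything after it is a single subadditivity-of-max inequality applied cellwise and summed. Intuitively, the lemma says that allowing a different best LL to be chosen on each finer cell can only help, because the coarse benchmark is the special case in which all subcells of a given coarse cell are forced to share one LL.
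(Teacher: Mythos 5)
Your proposal is correct and follows essentially the same route as the paper's proof: establish that the power-of-two hypothesis makes $\mathcal{P}'$ a refinement of $\mathcal{P}$, rewrite each benchmark as a sum of cellwise maxima, and apply the inequality $\max_i \sum_{l \in \mathcal{Z}_p(T)} v_i(l) \leq \sum_{p' \subseteq p} \max_i \sum_{l \in \mathcal{Z}_{p'}(T)} v_i(l)$ before summing over the coarse cells. Your write-up even makes explicit two details the paper leaves implicit (distributing the fixed maximizer $i^*_p$ across subcells, and why the nesting requires aligned grids), but the argument is the same.
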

\begin{proof}
Due to the fact that $m'$ and $m$ are chosen from $\{1,2,4,8,\ldots\}$, each $p' \in {\cal P}'$ is included in exactly one $p \in {\cal P}$\footnote{Assignment of the contexts that lie on the boundary to one of the adjacent sets can be done in any predetermined way without affecting the result.} Moreover, each $p \in {\cal P}$ includes exactly $(m'/m)^{d_{\text{EL}}}$ sets in ${\cal P}'$. Let ${\cal S}_p$ denote the set of $p' \in {\cal P}'$ such that $p' \subset p$.
For any $p \in {\cal P}$ we have
\begin{align}
\max_{ i \in {\cal M} } \sum_{l \in {\cal Z}_p(T)} v_i(l) \leq \sum_{ p' \in {\cal S}_p } \max_{i \in {\cal M}} \sum_{l \in {\cal Z}_{p'}(T)} v_i(l) .    \notag
\end{align} 
Hence,
\begin{align}
\sum_{t=1}^T v_{ i^*_{ p(t) } } (t) &=
 \sum_{p \in {\cal P}} \max_{i \in {\cal M}} \sum_{l \in {\cal Z}_p(T) } v_i(l)      \notag \\
 & \leq  \sum_{p \in {\cal P}} \sum_{ p' \in {\cal S}_p } \max_{i \in {\cal M}} \sum_{l \in {\cal Z}_{p'}(T)} v_i(l) \notag \\
& =  \sum_{p' \in {\cal P}'} \max_{i \in {\cal M}} \sum_{l \in {\cal Z}_{p'}(T) } v_i(l) 
=  \sum_{t=1}^T v_{ i^*_{ p'(t) } }(t) . \notag 
\end{align}
\end{proof}
Theorem \ref{thm:CHregret} and Lemma \ref{lemma:improvedbenchmark} shows the tradeoff between approximation and estimation errors. The benchmark we compare CH against improves (never gets worse) as $m_{\text{EL}}$ increases. Ideally, we would like CH to compete with $\sum_{t=1}^T v_{ i^*_{ \{ x_{\text{EL} (t) } \} }} (t)$, i.e., with respect to the best LL given context $x_{\text{EL}} (t)$. For $x_{\text{EL}} (t) \in p$, CH approximates $i^*_{ \{ x_{\text{EL}} (t) \} }$ with $i^*_p$. Learning (estimating) $i^*_{ \{ x_{\text{EL}} (t) \} }$ is harder than learning $i^*_p$ because the past observations that CH can use to learn $i^*_{ \{ x_{\text{EL}} (t) \} }$ is less than or equal to (usually less than) that it can use to learn $i^*_p$.  This is the reason why the regret increases with $m_{\text{EL}}$.  
The optimal value for $m_{\text{EL}}$ can be found by pre-training CH before its online deployment.

\begin{figure}[htb]
\fbox{%
\begin{minipage}[c]{0.95\columnwidth}%
{\fontsize{9}{9}\selectfont \flushleft{Contextual Hedge (CH)} 
\begin{algorithmic}
\STATE{Input: A non-increasing sequence of positive real numbers $\{ \eta(t) \}_{t \in \mathbb{N}^+}$, $m_{ \text{EL} }$ and $d_{ \text{EL} }$}
\STATE{Initialize sets: Create partition ${\cal P}_{ \text{EL} }$ of $[0,1]^{d_{\text{EL}}}$ into $(m_{\text{EL}})^{d_{\text{EL}}}$ identical hypercubes}
\STATE{Initialize counters: $N_{\text{EL}, p} = 0, \forall p \in {\cal P}_\text{EL}$, $t=1$} 
\STATE{Initialize losses: $L_{i,p} = 0$, $\forall i \in {\cal M}$, 
$p \in {\cal P}_{ \text{EL} }$ } 
\WHILE{$t\geq1$} 
\STATE{Find the set in ${\cal P}_{ \text{EL} }$ that $x_\text{EL}(t)$ belongs to, i.e., $p_\text{EL}(t)$. Let $p = p_\text{EL}(t)$ }
\STATE{Set $N_{\text{EL}, p} \leftarrow N_{\text{EL}, p} +1 $}
\STATE{Receive predictions of LLs: $\hat{\bs{h}}(t)$}
\STATE{Choose the LL $I(t)$ to follow according to the distribution $\bs{q}(t) := (q_1(t),\ldots, q_M(t))$ where
\begin{align}
q_{i}(t) = \frac{ \exp( -\eta(N_{\text{EL}, p}) L_{i,p} )} { \sum_{j=1}^M \exp ( - \eta(N_{\text{EL}, p}) L_{j,p} ) }      \notag
\end{align}
}
\STATE{Predict $\hat{y}(t) = \hat{h}_{I(t)}(t)$}
\STATE{Observe the true label $y(t)$}
\STATE{Receive the reward $r_{\text{EL}}(t) = \mr{I} (\hat{y}(t) = y(t))$ and observe losses of all LLs: $l_i(t) = \mr{I} (\hat{h}_i(t) \neq y(t))$ for $i \in {\cal M}$}
\STATE{Set $L_{i,p} \leftarrow  L_{i,p} + l_i(t)$ for $i \in {\cal M}$}
\STATE{$t \leftarrow t+1$} 
\ENDWHILE
\end{algorithmic} 
} %
\end{minipage}}\protect\protect\caption{Pseudocode of CH.}
\label{fig:FBEL} 
\end{figure}
\com{$N_{\text{EL}, p}$ is updated after it is used to estimate the probability. Hence, the value of $N_{\text{EL}, p}$ at the time the probability is calculated is the number of context arrivals to hypercube $p$ of the EL prior to time $t$.}

\section{Illustrative Results} \label{sec:numerical}

In this section, we evaluate the performance of several HB-based methods and compare them with numerous other state-of-the-art machine learning methods on a breast cancer diagnosis dataset from the UCI archive \cite{mangasarian1995breast}.

\vspace{-0.1in}
\subsection{Simulation Setup} \label{sec:simusetup}

\textbf{Description of the dataset:} The original dataset contains 569 instances and 32 attributes, of which one attribute is the ID number of the patient and one attribute is the label. Each instance contains features extracted from the images of fine needle aspirate (FNA) of breast mass.
There are 30 clinically relevant attributes. The diagnosis
outcome (label) is whether the tumor of the patient is malignant or benign.

\textbf{Benchmarks:} We compare HB with several state-of-the-art centralized and decentralized benchmarks. A centralized benchmark is a machine learning algorithm that has access to all the features of an instance. A decentralized benchmark on the other hand, applies the same LL and EL structure as the HB. Hence, each LL has access to a subset of features. However,
the algorithms used to train the LL and the EL are different from
the HB.

In the first set of experiments, we compare the HB methods with centralized benchmarks such as Support Vector Machine (SVM) and Logistic Regression (LR). In the second set of experiments, we study the performance of various ensemble learning methods for the EL, by fixing the learning algorithm of the LLs as IUP. In the third set of experiments, we evaluate the impact of system variables such as the number of LLs and past history on the performance of the HB methods. In the fourth set of experiments, we consider the extensions described in Section \ref{sec:extensions}.

The list of the algorithms used by the
EL in this section is given below.
\begin{itemize}
\item Adaptive Boosting (AdaBoost) \cite{freund1995desicion}.
\item Perceptron Weighted Majority (PWM) \cite{canzian2015timely, canzian2013ensemble}.
\item Blum's variant of Weighted Majority (Blum) \cite{blum1997empirical}.
\item Herbster's variant of Weighted Majority (TrackExp) \cite{herbster1998tracking}.
\end{itemize}
We also compare performance of the HB with standard benchmarks that are widely used in learning theory, which are listed below.
\begin{itemize}
\item Best LL: LL with the highest accuracy over the dataset. 
\item Worst LL: LL with the lowest accuracy over the dataset. 
\item Average LL: Accuracy averaged over all the LLs. 
\end{itemize}

When IUP is used, we assume that each LL has two prediction rules: rule $1$ always predicts malignant, and rule $2$ always predicts benign. Hence, using IUP, each LL is learning the best prediction for each set in its feature space partition.

\textbf{General setup:} For all the simulations, each algorithm is run $50$ times. The reported results correspond to the averages taken over these runs.

For the HB, we create $3$ LLs, and randomly assign $10$ attributes to each LL as its feature types for each run independently. The LLs do not have any common attributes.
Hence, $d_{i}=10$ for all $i\in\{1,2,3\}$. Each run of the HB is
done over $T = 10000$ data instances that are drawn independently and
uniformly at random from the $569$ instances of the original dataset except Experiment 1 and 2, in which training and test samples are separated (for offline algorithms).

\textbf{Performance metrics:} We report three performance metrics
for the above experiments: prediction error rate (PER), false positive rate (FPR) and false negative rate (FNR). PER is defined as the fraction of times the prediction is different from the true label. FPR and FNR are defined as the prediction error rate for benign cases and malignant cases, respectively. The main goal of diagnosis is to minimize the FPR, given a tolerable threshold for the FNR selected by the system
user. In the simulations, the threshold for FNR is set to be $3\%$, which is considered to be a reasonable level in breast cancer diagnosis \cite{linqi2015health}. Using this threshold, we can re-characterize the performance metric as follows. 
\begin{align}
\text{minimize FPR subject to FNR} \leq 3\% . \notag
\end{align}

FNR can be set below $3\%$ by  introducing a {\em hyper-parameter} which trade-offs FPR and FNR. The details are explained below. 

For IUP, $\hat{\pi}^i_{1,p}$ ($\hat{\pi}^i_{2,p}$) denotes the estimated accuracy for malignant (benign) classifier for feature set $p$ of LL $i$. \nrev{Prediction is performed using the indices given in \eqref{eqn:UCBindex}.}
LL $i$ will predict malignant if \nrev{$g^i_{1,p} \geq g^i_{2,p}$}.\footnote{Without loss of generality, we assume that the prediction of LL $i$ is malignant when \nrev{$g^i_{1,p} = g^i_{2,p}$}.} Otherwise, it will predict benign. 
Let $h_{\text{IUP}}$ be the hyper-parameter for IUP. We can modify the prediction rule of IUP as follows: LL $i$ predicts malignant if \nrev{$h_{\text{IUP}} \times g^i_{1,p} \geq g^i_{2,p}$}. Otherwise, it predicts benign. It is obvious that when $h_{\text{IUP}} > 1$, LL $i$ classifies more cases as malignant, which yields a decrease FNR and an increase FPR.

For SVMs and logistic regression, the hyper-parameter is the decision boundary between the malignant and benign cases. Assume that we assign label $1$ to the malignant case and $0$ to the benign case. An unbiased decision boundary will classify every output that is greater than $0.5$ as malignant and less than $0.5$ as benign. If we perturb the decision boundary such that it lies below $0.5$, then it is expected that SVM and LR classify more cases as malignant. This yields a decrease in FNR and an increase in FPR. 

In order to set FNR just below $3\%$, we first randomly select a hyper-parameter value and run the corresponding algorithm 50 times, and then calculate FPR and FNR. After this step, the hyper-parameter is adjusted to minimize the distance between FNR and the threshold. The reported PER and FPR correspond to the ones that are obtained for the hyper-parameter value which makes FNR just below $3\%$.

To compare the performance of various algorithms, we introduce the concept of {\em improvement ratio} (IR). Let $\text{PM(A)}$ denote the performance of algorithm A for metric $\text{PM}$. $\text{PM}$ can be any loss metric such as PER, FPR, FNR. The IR of algorithm A with respect to algorithm B is defined as 
\begin{align}
( \text{PM(B)} - \text{PM(A)} )/ \text{PM(B)}   .  \notag
\end{align}

\vspace{-0.1in}
\subsection{Experiment 1 (Table \ref{table:comprein}, Fig. \ref{fig:traininga})}
This experiment compares HB against LR, SVM,
AdaBoost (all trained offline); and Best LL, Average LL and Worst
LL benchmarks. The training of the offline methods is performed as follows. LR and SVM are trained in a centralized way and have access to all 30 features. In the test phase, they observe all the 30 features of the new instance and make a prediction. AdaBoost is trained in a decentralized way. It has $3$ weak learners (logistic regression with different parameters), which are randomly assigned to 10 of the 30 attributes.
These weak learners do not share any common attributes.

For each run, offline methods are trained using different 285 ($50\%$) randomly drawn instances from the original 569 instances. Then, the performances of both the HB and benchmarks are evaluated on $10,000$ instances drawn uniformly at random from the remaining $284$ instances (excluding
$285$ training instances) for each run. 

As Table \ref{table:comprein} shows, HB (IUP + WM) has $2.96\%$
PER and $2.61\%$ FPR when the FNR is set to be just below
$3\%$. Hence, the PER IR of HB (IUP + WM) with respect to the best benchmark algorithm (LR) is 0.51. We also note that the PER IR of the best LL with respect to the second best algorithm is 0.44. This implies that the IUP used by the LLs yields high classification accuracy, because it is able to learn online the best prediction given the types of features seen by each LL.

HB with WM outperforms the best LL, because it takes a weighted majority of the predictions of LLs as its final prediction, rather than relying on the predictions of a single LL. As observed from Table \ref{table:comprein}, all LLs have reasonably high accuracy, since PER of the worst LL is $6.23\%$. In contrast to WM, AH puts a probability distribution over the LLs based on their weights, and follows the prediction of the chosen LL. With highly accurate LLs, the deterministic approach (WM) works better than the probabilistic approach (AH), because in almost all time steps, the majority of
the LLs make correct predictions.

\begin{table}[h]
\caption{Comparison of HB with offline benchmarks}
\label{table:comprein}
\begin{center}
\begin{tabular}{| c | c | c | c | c | c | c |}
\hline 
{Units(\%)}& \multicolumn{3}{|c|}{{\bf Average}}   & \multicolumn{3}{|c|}{{\bf Standard Deviation}} \\ 
\hline
Performance Metric& {\bf PER} & {\bf FPR} & {\bf FNR} & {\bf PER}& {\bf FPR} & {\bf FNR} \\
\hline 
{\bf HB(IUP+WM)} & 2.96 & 2.61 & 2.99 & 0.73 & 1.26 & 0.88\\
\hline 
{\bf HB(IUP + AH)} & 3.83 & 4.46 & 2.98 & 0.85 & 1.43 & 0.79 \\
\hline 
{\bf Logistic Regression} & 6.04 & 8.48 & 2.94 & 2.18 & 4.07 & 1.3 \\
\hline 
{\bf AdaBoost} & 6.91 & 9.55 & 2.99 & 2.58 & 4.82 & 1.83 \\
\hline 
{\bf SVMs} & 9.73 & 14.21 & 2.98 & 2.5 & 4.19 & 1.98 \\
\hline 
{\bf Best LL of IUP} & 3.39 & 3.57 & 2.99 & 0.85 & 1.43 & 0.79 \\
\hline 
{\bf Average LL of IUP} & 4.68 & 6.17 & 2.97 & 0.89 & 1.49 & 0.85 \\
\hline 
{\bf Worst LL of IUP} & 6.23 & 9.06 & 2.99 & 1.62 & 2.71 & 1.54 \\
\hline 
\end{tabular}
\end{center}
\end{table}

\begin{figure}
\includegraphics[width=0.9\columnwidth]{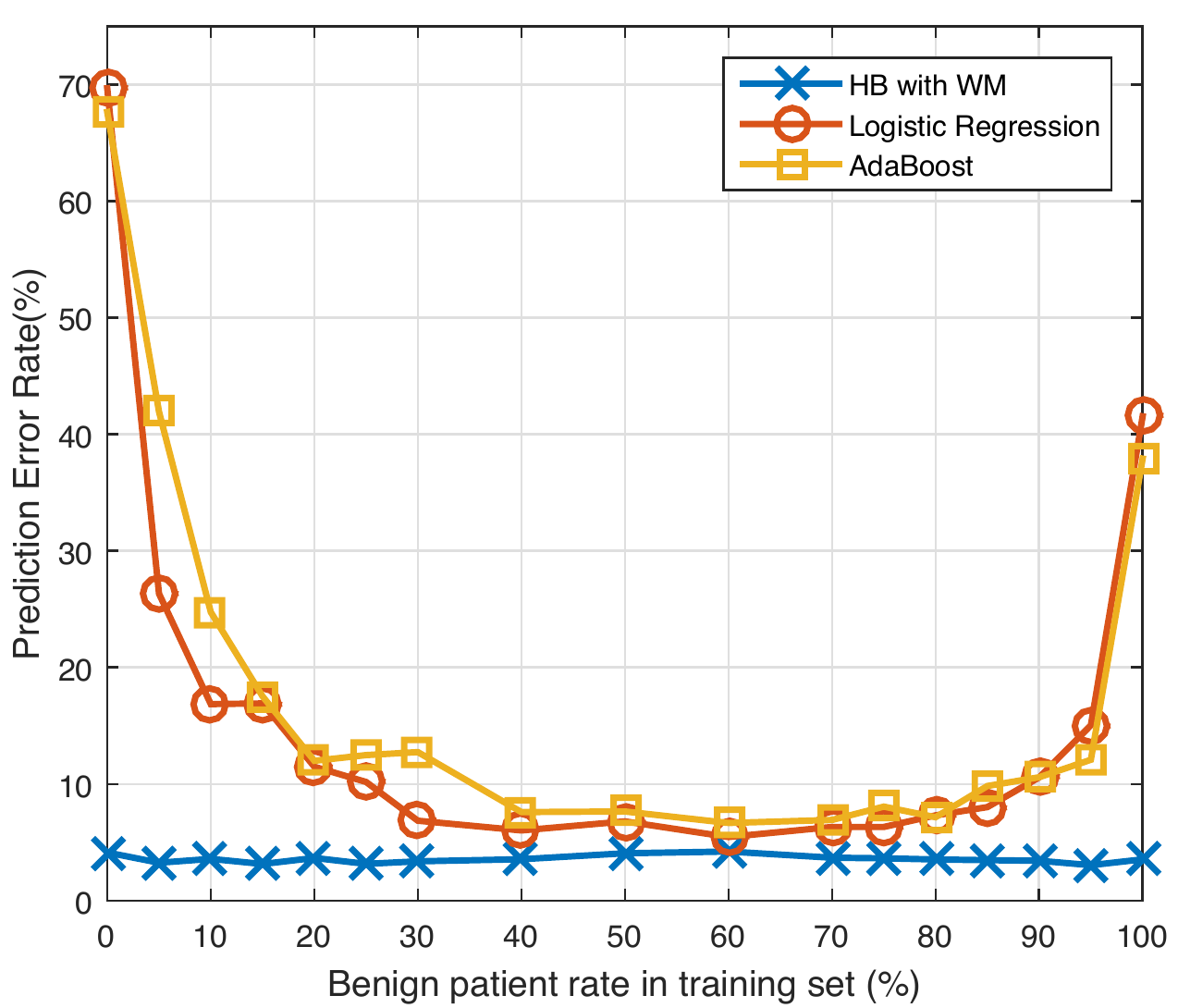}
\caption{PER of HB, LR and AdaBoost as a function of the composition of the training set.}
\label{fig:traininga}
\end{figure}

Another advantage of HB is that it has low standard deviation for PER, FPR and FNR, which is expected since IUP provides tight confidence bounds on the accuracy of the prediction rule chosen for any instance for which it exploits.

In Fig. \ref{fig:traininga}, the performances of HB (with WM), LR and AdaBoost are compared as a function of the training set composition. Since both LLs and the EL learn online in HB, its performance does not depend on the training set composition. On the other hand, the performance of LR and AdaBoost highly depends on the composition of the training set.
Although these benchmarks can be turned into an online algorithm by retraining them after every time step,
the computational complexity of the online implementations for these algorithms will be high compared to that of the HB.
Therefore, implementing the online versions of these benchmarks are not feasible when the dataset under consideration is large, and decisions have to be made on-the-fly.

\vspace{-0.1in}
\subsection{Experiment 2 (Table \ref{table:secondtable})}
  This experiment compares HB against four ensemble
learning algorithms: AdaBoost, PWM, Blum and TrackExp. The goal of this experiment is to assess how the algorithm used by the EL impacts the performance. To isolate this effect, all the LLs use the same learning algorithm. The learning algorithms we use for the LLs are IUP (online), LR and SVM (offline). In this experiment, the performance metric is the accuracy for the 1001st patient. All of the other simulation details are exactly the same as in Experiment 1.
\begin{table*}[t]
\caption{Comparison of HB with other ensemble learning methods.}
\label{table:secondtable}
\begin{center}
\begin{tabular}{| c | c | c | c | c | c | c | c | c | c |}
\hline 
Unit(\%)& \multicolumn{9}{|c|}{{\bf Local Learner Algorithm}} \\ 
\hline
& \multicolumn{3}{|c|}{{\bf IUP}} & \multicolumn{3}{|c|}{{\bf Logistic Regression}} & \multicolumn{3}{|c|}{{\bf SVMs}}\\ 
\hline
Ensemble Learning Algorithm& {\bf PER} & {\bf FPR} & {\bf FNR} & {\bf PER}& {\bf FPR} & {\bf FNR}& {\bf PER}& {\bf FPR} & {\bf FNR} \\
\hline 
{\bf HB(with WM)} & \textbf{2.72} & \textbf{1.96} & 2.94 & \textbf{2.81} & \textbf{2.71} & 2.97 & \textbf{3.43} & \textbf{3.51} & 2.97 \\
\hline 
{\bf HB(with AH)} & 4.35 & 5.05 & \textbf{2.93} & 3.61 & 3.81 & 2.95 & 4.63 & 5.11 & 2.97\\
\hline 
{\bf AdaBoost} & 3.02 & 3.09 & 2.98 & 3.28 & 3.15 & 2.97 & 4.27 & 4.16 & 2.95 \\
\hline 
{\bf PWM} & 3.11 & 2.82 & 2.96 & 2.96 & 3.08 & 2.96 & 3.95 & 4.55 & 2.96 \\
\hline 
{\bf Blum} & 3.09 & 3.12 & 2.93 & 3.5 & 3.78 & 3.00 & 3.68 & 4.18 & 2.95 \\
\hline 
{\bf TrackExp} &2.97 & 2.21 & 2.99 & 3.03 & 3.05 & \textbf{2.93} & 4.02 & 3.81 & 2.99 \\
\hline 
{\bf Best LL} &3.96 & 4.69 & 2.96& 3.22 & 3.3 & 2.99 & 3.96 & 4.26 & 2.96 \\
\hline 
{\bf Average LL} & 5.22 & 7.04 & 2.98 & 4.5 & 4.58 & 2.94 & 5.64 & 5.8 & \textbf{2.92}  \\
\hline 
{\bf Worst LL} & 6.55 & 9.41 & 2.97 & 6.4 & 6.48 & 2.97 & 6.36 & 7.03 & 2.94 \\
\hline 

\end{tabular}
\end{center}
\end{table*}

As seen in Table \ref{table:secondtable},
performance of the HB is better than the other ensemble learning
methods when the FNR threshold is set to $3\%$. More specifically, the performance improvement ratio of HB (with WM) in comparison with the second best algorithm (TrackExp) is 0.08 and 0.11 in terms of PER and FPR when IUP is used by the LLs.
%

\vspace{-0.1in}
\subsection{Experiment 3 (Fig. \ref{fig:fig23})}

This experiment analyzes the performance of the HB as a function of two system parameters: the number of LLs and the dataset size. Firstly, we analyze the performance using different numbers of LLs - from 2 to 30 -, over $10000$ patients (as in Experiment 1). In this simulation, all the LLs have access to different types of attributes. Hence, as the number of LLs increase, the number of attributes per LL decreases. This can be viewed as increasing the amount of decentralization in the system. Secondly, we analyze the performance as a function of the total number of patients
that have arrived so far. For this case, the number of LLs is fixed to $3$.

\textbf{Effect of the number of LLs:} The left Fig. \ref{fig:fig23} shows the performance of the HB with WM and AH as a function
of the number of LLs. In this case, the number of features seen by each LL is roughly equal to $30/M$. As $M$ increases both the performance of the LLs and the EL decreases. The decrease in the performance of the LLs is due to the fact that they see less features, and each LL has less information about the data. The decrease in the performance of the EL is due to the decrease in the performance of the LLs.

\textbf{Effect of the number of previously diagnosed patients:} The right Fig. \ref{fig:fig23} shows the performance of the HB as a function of the number past patients. As expected, the performance improves monotonically with the number of past patients, which is consistent with the regret results we have obtained.

\begin{figure}
\includegraphics[width=\columnwidth]{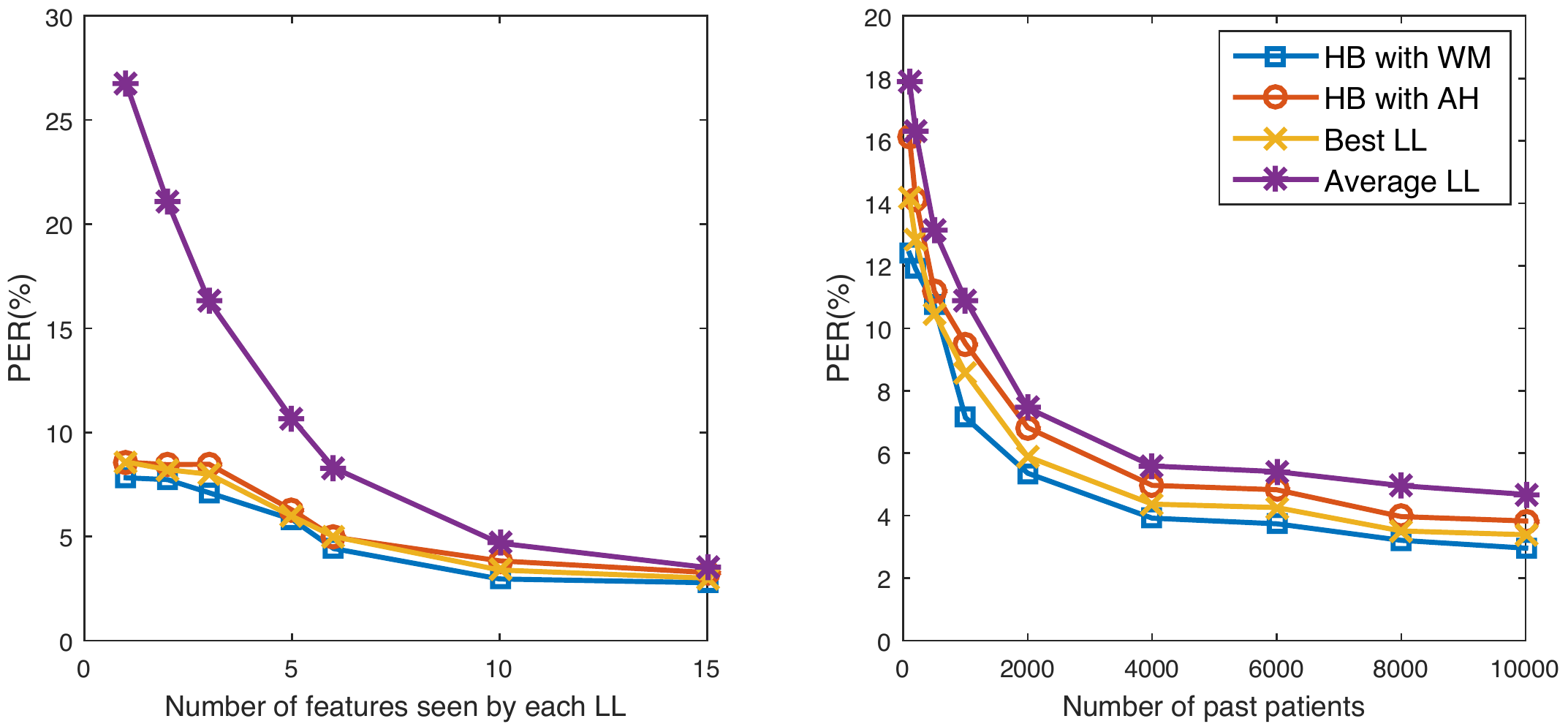}
\caption{{\bf Left:} Number of features seen by each LL vs PER, {\bf Right:} Number of past patients vs PER}
\label{fig:fig23}
\end{figure}

\vspace{-0.1in}
\subsection{Experiment 4}
\subsubsection{Extension 1: Active EL (Table \ref{table:ActiveEL1}, Table \ref{table:ActiveEL2})}

Table \ref{table:ActiveEL1} shows the percentage of times the LLs explore and exploit. 
The LLs are in exploration in 1.5\% of the time steps, and the LLs' overall accuracy in these steps is around 50\%. 

\begin{table}[h!]
	\caption{Performance of exploration step and exploitation step.}
	\label{table:ActiveEL1}
	\begin{center}
		\begin{tabular}{| c | c | c | c |}			
			\hline 
			Units(\%) & Exploration & Exploitation & Average \\
			\hline 
			PER & 50.34 & 3.79 & 4.51 \\
			\hline 
			FPR & 42.88 & 2.51 & 2.94 \\
			\hline
			FNR & 55.88 & 5.93 & 7.11 \\
			\hline			
			Ratio & 1.52 & 98.48 & 100.00 \\
			\hline
		\end{tabular}
	\end{center}
\end{table}

If the EL only considers the predictions of the LLs that exploit (Active EL), both HB with WM and AH have improved performance compared to the original HB, as shown in Table \ref{table:ActiveEL2}.
Specifically, the PER IRs of Active EL (with WM or AH) with respect to the original HB (with WM or AH) are 0.12 and 0.14, respectively.

\begin{table}[h!]
	\caption{Performance improvement with Active EL.}
	\label{table:ActiveEL2}
	\begin{center}
		\begin{tabular}{| c | c | c | c | c |}
			\hline
			Units(\%) & \multicolumn{3}{|c|}{{\bf HB (with WM)}}    \\ 
			\hline
			& Best LL & HB & HB with Active EL \\
			\hline 
			PER & 3.39 & 2.96 & 2.60  \\
			\hline 
			FPR & 3.57 & 2.61 & 2.12  \\
			\hline
			FNR & 2.99 & 2.99 & 2.98  \\
			\hline
			Units(\%) & \multicolumn{3}{|c|}{{\bf HB (with AH)}}    \\ 
			\hline
			& Best LL & HB & HB with Active EL \\
			\hline 
			PER & 3.39 & 3.83 & 3.30  \\
			\hline 
			FPR & 3.57 & 4.46 & 3.46  \\
			\hline
			FNR & 2.99 & 2.98 & 2.98  \\
			\hline
		\end{tabular}
	\end{center}
\end{table}

\subsubsection{Extension 2: Missing and erroneous labels (Fig \ref{fig:fig45})}

In this section, we illustrate the degradation in performance that results from randomly introducing missing or erroneous labels. 
When the label is missing, the LLs and the EL do not update their learning algorithms. 
Fig. \ref{fig:fig45} shows the affect of the missing label rate to the PER. 
It is observed that when 50\% of the labels are missing, the PER degradation is only around 1\% for both HB (with AH) and HB (with WM). This shows the robustness of HB to missing labels. 

\begin{figure}[h!]
	\includegraphics[width=\columnwidth]{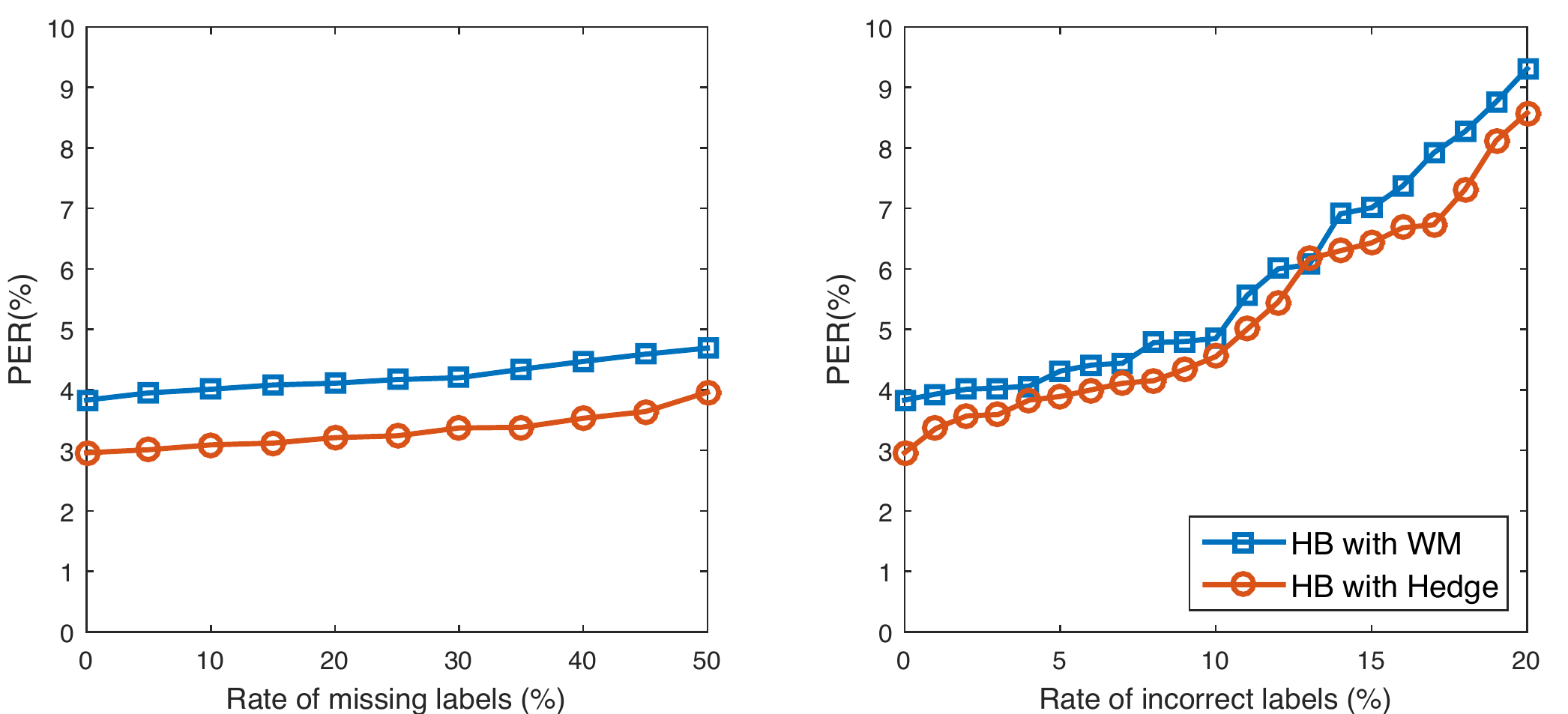}
	\caption{{\bf Left:} Performance degradation due to missing labels. {\bf Right:} Performance degradation due to erroneous labels.}
	\label{fig:fig45}
\end{figure}

Next, we introduce erroneous labels (for binary labels, this correspond to flipped labels). Since the LLs and the EL update their learning algorithms when the label is incorrect, this results inaccurate accuracy estimates. 
Fig. \ref{fig:fig45} shows the affect of the erroneous label rate to the PER. 
For instance, when 10\% of the labels are erroneous, the PER degradation is less than 2\% for both HB (with AH) and HB (with WM).

\subsubsection{Extension 3: Contextual EL (CEL)}
 This experiment studies CEL introduced in Section \ref{sec:extensions}. CEL is compared with the original HB and the best LL (each LL uses IUP). In addition to this, the predictive accuracy of the proposed method as a function of the number of features assigned to the CEL ($d_{\text{EL}}$) is computed. 
The simulation parameters are exactly the same as the parameters used in Experiment 1.

As Table \ref{table:contEL1} shows, CEL with WM has $2.31\%$ PER, $1.48\%$ FPR, and $2.98\%$ FNR. Hence, the performance improvement ratios with respect to the original HB (IUP+WM) approach are 0.22 and 0.43 in terms of PER and FPR, respectively. In addition, the performance IRs with respect to the best LL are 0.32 and 0.59 in terms of PER and FPR, respectively. In other words, CEL with WM significantly outperforms the original HB (IUP+WM) and the best LL in terms of both PER and FPR.
The reason for these improvements is that CEL learns the best LL for each feature set in its partition, rather than learning the best LL in overall.

\begin{table}[h!]
	\caption{Comparison of CEL with original HB and the best LL in terms of PER, FPR and FNR ($d_{EL}=3$ for CEL(WM), $d_{EL}=4$ for CEL(AH))}
	\label{table:contEL1}
	\begin{center}
		\begin{tabular}{| c | c | c | c |}
			\hline 
			Units (\%) & {\bf PER} & {\bf FPR} & {\bf FNR} \\
			\hline 
			{\bf CEL(WM)} & 2.31 & 1.48 & 2.98 \\
			\hline 
			{\bf CEL(AH)} & 3.49 & 4.01 & 2.95 \\
			\hline 
			{\bf HB(WM)} & 2.96 & 2.61 & 2.99 \\
			\hline 
			{\bf HB(AH)} & 3.83 & 4.46 & 2.98  \\
			\hline 
			{\bf Best LL of IUP} & 3.39 & 3.57 & 2.99 \\
			\hline 
		\end{tabular}
	\end{center}
\end{table}

Table \ref{table:contEL2} shows the performance of CEL as a function of the number of features observed by the EL. When WM is used, the performance improves until $d_{\text{EL}}=3$, while when AH is used the performance improves until $d_{\text{EL}}=4$.
The reason that the performance does not improve monotonically with $d_{\text{EL}}$ is the tradeoff between estimation and approximation errors, which is described in detail in Section \ref{sec:extensions}.

 \begin{table}[h!]
\caption{Performance of CEL as a function of the number of features that CEL can observe}
\label{table:contEL2}
\begin{center}
\begin{tabular}{| c | c | c | c | c | c | c |}

\hline 
& \multicolumn{3}{|c|}{{\bf CEL}}   & \multicolumn{3}{|c|}{{\bf CEL}}  \\
                    &  \multicolumn{3}{|c|}{{\bf +WM}}   & \multicolumn{3}{|c|}{{\bf +AH}}  \\ 
\hline
$d_{EL}$& {\bf PER} & {\bf FPR} & {\bf FNR} & {\bf PER}& {\bf FPR} & {\bf FNR} \\
\hline 
{\bf 0 (Original HB)} & 2.96 & 2.61 & 2.99 & 3.83 & 4.46 & 2.98\\
\hline 
{\bf 1} & 2.71 & 2.13 & 2.99 & 4.16& 5.07 & 2.97 \\
\hline 
{\bf 2} & 2.33 & 1.47 & 3.00 & 3.74 & 4.31 & 2.98 \\
\hline 
{\bf 3} & \textbf{2.31} & \textbf{1.48} & \textbf{2.98} & 3.64 & 4.16 & 2.96 \\
\hline 
{\bf 4} & 2.42 & 1.61& 2.92 & \textbf{3.49} & \textbf{4.01} & \textbf{2.95} \\
\hline 
{\bf 5} & 2.46 & 1.54 & 2.97 & 4.03 & 4.95 & 2.98\\
\hline 
{\bf 6} & 2.58&  1.71 & 2.93 & 4.35 & 5.34 & 2.99 \\
\hline
\end{tabular}
\end{center}
\end{table}

\subsubsection{Effect of $\alpha$ on performance (Table \ref{tab:Alpha})}
As $\alpha$ in Assumption \ref{ass:1} changes, optimized partitioning parameter $m_i = \lceil{T^{1/(2\alpha + d_i)}} \rceil$ changes. In illustrative results, we set $T=10000$, $M=3$ and  $d_i=10$ for all LLs. Thus, if $\alpha \geq 1.65$, $m_i = 2$. Otherwise, $m_i = 3$. 
Table \ref{tab:Alpha} shows that the optimal performance is achieved when $m_i = 2$.

\begin{table}[h!]
	\caption{Performance of HB for different $\alpha$ values.}
	\label{tab:Alpha}
	\begin{center}
		\begin{tabular}{| c | c | c | c | c | c | c | c |}
			\hline
			\multicolumn{2}{|c|}{\bf Units(\%)} & \multicolumn{3}{|c|}{{\bf HB (WM)}} &  \multicolumn{3}{|c|}{{\bf HB (AH)}} \\ 
			\hline
			$\alpha$ & $m_i$ & PER & FPR & FNR & PER & FPR & FNR \\
			\hline 
			$\geq$ 1.65 & 2 & 2.96 & 2.61 & 2.99 & 3.83 & 4.46 & 2.98 \\
			\hline 
			$<$ 1.65 & 3 & 5.56 & 6.81 & 2.98 & 6.46 & 9.02 & 2.97 \\
			\hline
		\end{tabular}
	\end{center}
\end{table}

\section{Related Works} \label{sec:related}

In this section, we compare our proposed method with other online
learning and ensemble learning methods in terms of the underlying
assumptions and performance bounds.

\textbf{Heterogeneous data observations}: Most of the existing ensemble
learning methods assume that the LLs make predictions by observing
the same set of instances \cite{littlestone1989weighted}, \cite{fan1999application},
\cite{masud2009integrating}, \cite{street2001streaming}, \cite{minku2012ddd}.
Our methods allow the LLs to act based on heterogeneous data streams
that are related to the same event. Moreover, we impose no statistical
assumptions on the correlation between these data streams. This is
achieved by isolating the decision making process of the EL from the
data. Essentially, the EL acts solely based on the predictions it
receives from the LLs.

Our proposed method can be viewed as attribute-distributed learning
\cite{zheng2011attribute, yubig2013}. In attribute-distributed learning, learners
observe different features of the same instance and make local predictions.
These local predictions are merged into a global prediction by a fusion
center (EL). Numerous papers have considered the attribute-distributed
learning model and proposed collaborative training algorithms to train
the LLs \cite{shutin2010space,hershberger2001distributed}. However,
these algorithms require information exchange between the LLs. In
contrast to these works, in our proposed work, information exchange
is only possible between an LL and the EL. \rev{Hence concerns about data
security and privacy are ruled out in our work.}

\rev{
There is a wide range of literature that develops distributed estimation techniques in which distributed LLs come up with consensus-based \cite{mateos2010distributed} or diffusion-based \cite{chen2012diffusion} parameter estimates by iteratively exchanging their local parameters computed based on the local observations. Unlike these works, in which the optimal parameter estimation problem is formulated as a distributed optimization problem, in our work the optimal prediction rule selection problem is formulated as a learning problem, and we explicitly focus on balancing the tradeoff between exploration and exploitation. Moreover, we do not make any restriction on the type of classifiers (prediction rules) used by LLs (except the similarity assumption), and do not require any message exchange between LLs. 
}


\textbf{Data-dependent oracle benchmark vs. empirical risk minimization}:
Our method can be viewed as online supervised learning with bandit
feedback, because only the estimated accuracies of the prediction
rules chosen by the LLs can be updated after the label is observed.
Most of the prior works in this field use empirical risk minimization
techniques \cite{vapnik1992principles, kivinen2004online} to learn the optimal hypothesis. 
Let $H_i := {\cal X}_i \rightarrow {\cal F}_i$ denote a hypothesis for LL $i$, which is simply a mapping from the data instance that LL $i$ observes to the set of prediction rules of LL $i$. Since the data instance space is taken as $[0,1]^{d_{i}}$, there are infinitely many hypothesis. The optimal hypothesis for LL $i$ is $H^*_i(x_i) = f^*_i(x_i)$. 
%
%
 As opposed to our work, 
 ERM assumes access to $N$ i.i.d. samples of the data instances, the label and the predictions (given as $\{ (\bs{x}(t), y(t), \{ \hat{y}_f(t) \}_{ f\in{\cal F} } ) \}_{t=1}^{N}$) by which the loss of any hypothesis $H_{i}$ can be evaluated. Using these i.i.d. samples, the empirical risk of $H_i$ is calculated as 
 $\overline{Risk}(H_i) 
 = \frac{1}{N} \sum_{t=1}^{N} [R_{f^*_i (x_i(t) ) } (t) - R_{ H_i ( x_i (t) ) }(t) ]$. 
 For LL i, ERM seeks out to find a hypothesis $\hat{H}_i$ such that 
 $\hat{H}_i \in \argmin_{h \in {\cal H}_i } \overline{Risk}(h)$.
  
 There are several important differences between ERM and our approach: In our approach the LLs and the EL update their hypothesis on-the-fly as more data and observations are gathered. IUP is an alternative to solving for the hypothesis that minimizes the empirical risk at each time step. Moreover, our algorithms are:
 (i) guaranteed to converge to the optimal hypothesis, and the convergence rate is explicitly characterized in terms of the regret bounds; 
 (ii) work efficiently even when the hypothesis space ${\cal H}_i$ is infinite or very large by partitioning ${\cal X}_i$; 
 (iii) work under partial feedback, i.e., only the prediction of the selected prediction rule is observed, hence the samples available at time step $t$ are $( \bs{x}(t), y(t), \{ \hat{y}_{ a_i(t) }(t) \}_{ i \in {\cal M} } \} )$. Moreover, not all of these are observed by the same learner.

\textbf{Reduced computational and memory complexity}: Most ensemble learning
methods require access to the entire dataset \cite{freund1995desicion} or process
the data in chunks \cite{fan1999application}. For instance, \cite{fan1999application}
considers an online version of AdaBoost, in which weights are updated
in a batch fashion after the processing of each data chunk is completed.
Unlike this work, our method processes each instance only once upon
its arrival, and do not need to store any past instances. Moreover,
the LLs only learn from their own instances, and no data exchange
between LLs are necessary. The above properties make our method efficient in terms of memory and computation, and suitable for distributed implementation.

\textbf{Decentralized consensus optimization (DCO):}
The goal in DCO is to maximize a global objective function subject to numerous local constraints \cite{nedic2009distributed,nedic2010constrained,yuan2011distributed,chang2014distributed}. In this framework, distributed agents, which only have access to local information, exchange messages to cooperate with each other, in order to maximize the global payoff. The message exchange process continues until a predefined stopping criterion is satisfied. Unlike DCO, in our work, both local and global payoff functions are not known in advance. The LLs and the EL can only obtain noisy feedback about these payoffs, which is whether a prediction error happened or not. Moreover, the optimal actions (prediction rules) depend on the data instance (context), and hence are dynamically changing. In addition, the information only flows from the LLs to the EL, and there is only a single message exchange at each decision (time) step. Unlike maximizing the global objective function of a single-shot decision problem, our goal is to maximize the cumulative reward incurred over multiple decision steps.

\section{Conclusion}\label{sec:conc}

In this paper we proposed a new online learning method that jointly considers the learning problem of the LLs and the EL. The proposed method comes with confidence and regret guarantees, which is very important in practice for many applications. Our theoretical results show that the time order of the regret for the EL is not affected by the number of LLs, which implies that the convergence speed of the EL to the optimal remains almost unchanged when the number of LLs in the system is increased. 
Our extensive numerical results show the superiority of the proposed approach in terms of its predictive accuracy. Specifically, Contextual EL performs significantly better than other ensemble learning methods, since it can utilize more information about the data features. 
We also proposed various other extensions to our proposed methods to deal with low confidence predictions during explorations and adaptation to missing labels. 

\appendices


\section{Proof of Theorem \ref{thm:Ahedgeregret}}\label{app:hedgeproof}

\rev{The proof is similar, in spirit, to that of algorithm Exp3 without mixing in \cite{bubeck2010jeux}.} Since AH observes the rewards of all prediction rules, it uses the actual cumulative losses instead of unbiased estimates of losses that is used by Exp3. 
Recall that the loss of LL $i$ at time $t$ is defined as $l_{i}(t) = 1 - v_{i}(t)$. Both $v_{i}(t)$ and $l_{i}(t)$ are in $\{0,1\}$. However, our analysis below holds for any $v_{i}(t) \in [0,1]$, hence the regret bound we derive for AH will still hold when the rewards are bounded to be in $[0,1]$.

For any $i \in {\cal M}$ we have
\jinc{
\begin{align}
& \sum_{t=1}^T v_{i}(t) - \sum_{t=1}^T \mr{E}_{I(t) \sim \bs{q}(t) } [ v_{I(t)}(t) ]  \notag \\
& = \sum_{t=1}^T   \mr{E}_{I(t) \sim \bs{q}(t) } [ l_{I(t)}(t) ] - \sum_{t=1}^T l_{i}(t) . \label{eqn:1}
\end{align}
}
\sinc{
\begin{align}
& \sum_{t=1}^T v_{i}(t) - \sum_{t=1}^T \mr{E}_{I(t) \sim \bs{q}(t) } [ v_{I(t)}(t) ]  \notag \\
&= \sum_{t=1}^T (1 - l_{i}(t) ) - \sum_{t=1}^T  (1 - \mr{E}_{I(t) \sim \bs{q}(t) } [ l_{I(t)}(t) ] )  \notag \\
& = \sum_{t=1}^T   \mr{E}_{I(t) \sim \bs{q}(t) } [ l_{I(t)}(t) ] - \sum_{t=1}^T l_{i}(t) . \label{eqn:1}
\end{align}
}
$\mr{E}_{I(t) \sim \bs{q}(t) } [ l_{I(t)}(t) ] $ can be rewritten in the following way:
\jinc{
\begin{align}
& \mr{E}_{I(t) \sim \bs{q}(t) } [ l_{I(t)}(t) ]  \notag \\
&= \frac{1}{\eta(t)} 
\log \left(   \mr{E}_{J \sim \bs{q}(t) } [ \exp( -\eta(t) ( l_{J}(t) - \mr{E}_{I(t) \sim \bs{q}(t) } [ l_{I(t)}(t) ]  ) ) ]  \right) \notag \\
&- \frac{1}{\eta(t)} \log \left( \mr{E}_{J \sim \bs{q}(t) } [ \exp( -\eta(t) l_{J}(t) ) ] \right) \label{eqn:2}
\end{align}
}
\sinc{
\begin{align}
& \mr{E}_{I(t) \sim \bs{q}(t) } [ l_{I(t)}(t) ]  
= 
\frac{1}{\eta(t)} \log \left(  \exp( \eta(t) \mr{E}_{I(t) \sim \bs{q}(t) } [ l_{I(t)}(t) ]  )  \right. \notag \\ 
& \left. \frac{ \mr{E}_{J \sim \bs{q}(t) } [ \exp( -\eta(t) l_{J}(t) ) ] }
{\mr{E}_{J \sim \bs{q}(t) } [ \exp( -\eta(t) l_{J}(t) ) ] }  \right) \notag \\
&= \frac{1}{\eta(t)} 
\log \left(   \mr{E}_{J \sim \bs{q}(t) } [ \exp( -\eta(t) ( l_{J}(t) - \mr{E}_{I(t) \sim \bs{q}(t) } [ l_{I(t)}(t) ]  ) ) ]  \right) \notag \\
&- \frac{1}{\eta(t)} \log \left( \mr{E}_{J \sim \bs{q}(t) } [ \exp( -\eta(t) l_{J}(t) ) ] \right) \label{eqn:2}
\end{align}
}
Below, we will bound the first and second term in \eqref{eqn:2} separately. For the first term we have
\jinc{
\begin{align}
& \log \left(   \mr{E}_{J \sim \bs{q}(t) } [ \exp( -\eta(t) ( l_{J}(t) - \mr{E}_{I(t) \sim \bs{q}(t) } [ l_{I(t)}(t) ]  ) ) ]  \right)    \notag \\ 
&= \log \left( \mr{E}_{J \sim \bs{q}(t) } [ \exp( -\eta(t)  l_{J}(t) ) ]  \right) + \eta(t) \mr{E}_{I(t) \sim \bs{q}(t) } [ l_{I(t)}(t) ] \notag \\
& \leq \mr{E}_{J \sim \bs{q}(t) } [ \exp( -\eta(t)  l_{J}(t) ) ] - 1
 + \eta(t) \mr{E}_{I(t) \sim \bs{q}(t) } [ l_{I(t)}(t) ] \notag \\
& = \mr{E}_{J \sim \bs{q}(t) } [ \exp( -\eta(t)  l_{J}(t) )  - 1 + \eta(t) l_{J}(t) ]  \notag \\
&\leq \mr{E}_{J \sim \bs{q}(t) } \left[ \eta^2(t) l^2_{J}(t)/  2 \right]  
 \leq  \eta^2(t) / 2 . \label{eqn:3}
\end{align}
where the first inequality follows from $\log x \leq x-1$ for $x \geq 0$, the second inequality follows from $\exp(-x) -1 + x \leq x^2 / 2$ for $x \geq 0$ and the third inequality follows from $l_{J}(t) \in [0,1]$ a.s.
}
\sinc{
\begin{align}
& \log \left(   \mr{E}_{J \sim \bs{q}(t) } [ \exp( -\eta(t) ( l_{J}(t) - \mr{E}_{I(t) \sim \bs{q}(t) } [ l_{I(t)}(t) ]  ) ) ]  \right)    \notag \\ 
&= \log \left( \mr{E}_{J \sim \bs{q}(t) } [ \exp( -\eta(t)  l_{J}(t) ) ]  \right) + \eta(t) \mr{E}_{I(t) \sim \bs{q}(t) } [ l_{I(t)}(t) ] \notag \\
& \leq \underbrace{\mr{E}_{J \sim \bs{q}(t) } [ \exp( -\eta(t)  l_{J}(t) ) ] - 1}_{ \text{since } \log x \leq x-1 \text{ for } x \geq 0   }
 + \eta(t) \mr{E}_{I(t) \sim \bs{q}(t) } [ l_{I(t)}(t) ] \notag \\
& = \underbrace{\mr{E}_{J \sim \bs{q}(t) } [ \exp( -\eta(t)  l_{J}(t) )  - 1 + \eta(t) l_{J}(t) ] }_{ \text{by linearity of expectation} } \notag \\
&\leq \underbrace{\mr{E}_{J \sim \bs{q}(t) } \left[ \eta^2(t) l^2_{J}(t)/  2 \right] }_{ \text{since }
\exp(-x) -1 + x \leq x^2 / 2 \text{ for } x \geq 0} \notag \\
& \leq \underbrace{ \eta^2(t) / 2 }_{\text{since } l_{J}(t) \in [0,1] \text{ almost surely }}. \label{eqn:3}
\end{align}
}
Let
\jinc{
$\Phi_t(\eta) := \frac{1}{\eta} \log \frac{1}{M} \sum_{i=1}^M \exp( -\eta L_{i}(t) )$.
}
\sinc{
\begin{align}
\Phi_t(\eta) := \frac{1}{\eta} \log \frac{1}{M} \sum_{i=1}^M \exp( -\eta L_{i}(t) ) .     \notag
\end{align}
}
For the second term of \eqref{eqn:2} we have
\begin{align}
& - \frac{1}{\eta(t)} \log \left( \mr{E}_{J \sim \bs{q}(t) } [ \exp( -\eta(t) l_{J}(t) ) ] \right)  \notag \\
& = - \frac{1}{\eta(t)} \log \frac{ \sum_{i=1}^M \exp( -\eta(t) l_{i}(t) ) \exp( -\eta(t) L_{i}(t-1) ) }
{  \sum_{j=1}^M \exp( -\eta(t) L_{j}(t-1) )  } \notag \\
&= - \frac{1}{\eta(t) } \log \frac{ \sum_{i=1}^M \exp( -\eta(t) L_{i}(t) ) }
{  \sum_{j=1}^M \exp( -\eta(t) L_{j}(t-1) )  } \notag \\
&= \Phi_{t-1}(\eta(t)) - \Phi_{t}(\eta(t)) \label{eqn:4}
\end{align}
Using \eqref{eqn:3} and \eqref{eqn:4} we have
\begin{align}
& \sum_{t=1}^T v_{i}(t) - \sum_{t=1}^T \mr{E}_{I(t) \sim \bs{q}(t) } [ v_{I(t)}(t) ]       \notag \\
&\leq \sum_{t=1}^T \frac{\eta(t) }{2} 
+ \sum_{t=1}^T [ \Phi_{t-1}(\eta(t)) - \Phi_{t}(\eta(t)) ]
- \sum_{t=1}^T l_{i}(t) . \label{eqn:5}
\end{align}
We will use the following properties of $\Phi_t(\eta) $ proofs of which can be found in Theorem 2.1 of \cite{bubeck2010jeux}.
\jinc{
$\textbf{P1}:  \sum_{t=1}^T  [ \Phi_{t-1}(\eta(t)) - \Phi_{t}(\eta(t))]  
=  \sum_{t=1}^{T-1}  [ \Phi_{t}(\eta(t+1)) - \Phi_{t}(\eta(t))]  - \Phi_{T}(\eta(T))$,   
$\textbf{P2}:  \frac{d}{d \eta} \Phi_t(\eta) \geq 0$.
}
\sinc{
\begin{align}
\textbf{P1}: & \sum_{t=1}^T  [ \Phi_{t-1}(\eta(t)) - \Phi_{t}(\eta(t))]  \notag \\
&=  \sum_{t=1}^{T-1}  [ \Phi_{t}(\eta(t+1)) - \Phi_{t}(\eta(t))]  - \Phi_{T}(\eta(T))     \notag \\
\textbf{P2}: & \frac{d}{d \eta} \Phi_t(\eta) \geq 0 \notag
\end{align}
}
Since $\eta(t)$ is a non-increasing sequence 
$\Phi_{t}( \eta(t+1) ) - \Phi_{t}( \eta(t) ) \leq 0$ 
by $\textbf{P2}$. Hence,
\begin{align}
& \sum_{t=1}^T  [ \Phi_{t-1}(\eta(t)) - \Phi_{t}(\eta(t)) ] 
\leq - \Phi_{T}(\eta(T))       \notag \\
& = \frac{\log M} {\eta(T)} - \frac{1} {\eta(T)} \log \left( \sum_{j=1}^M \exp(-\eta(T) L_{j}(T) ) \right) \notag \\
& \leq \frac{\log M} {\eta(T)} - \frac{1} {\eta(T)}  \log \exp ( - \eta(T) L_{i}(T) ) \notag \\
& = \frac{\log M} {\eta(T)} + \sum_{t=1}^T l_{i}(t)  . \label{eqn:6} 
\end{align}
Using \eqref{eqn:5} and \eqref{eqn:6} together with $\eta(t) = \sqrt{ \log M / t }$ and the result in Appendix \ref{app:seriesbound}, we obtain
\begin{align}
 \sum_{t=1}^T v_{i}(t) - \sum_{t=1}^T \mr{E}_{I(t) \sim \bs{q}(t) } [ v_{I(t)}(t) ]    
 & \leq \sum_{t=1}^T \frac{\eta(t) }{2}  + \frac{\log M} {\eta(T)} \notag \\
 & \leq 2 \sqrt{ T \log M } \notag .
\end{align}
Since the choice of $i$ is arbitrary, the result follows by setting $i = I_b$.

\section{Preliminaries for the proof of  Theorem \ref{thm:CUPregret}}\label{app:prelim}

All the expressions used in the proofs below are related to LL $i$. To simplify the notation, we drop subscripts/superscripts related to LL $i$ from the notation. For instance, we use $\hat{\pi}_{f,p}(t)$ instead of $\hat{\pi}^i_{f,p}(t)$, $N_{f,p}(t)$ instead of $N^i_{f,p}(t)$, $p(t)$ instead of $p_i(t)$ and \rev{$f^*$ instead of $f^*_i(x)$ when the data instance we refer to is clear from the context.}

The regret is computed by conditioning on $\bs{X}^T_i = \bs{x}^T_i$. 
Let $\tau^i_{p}(t)$ denote the time step in which the $t$th context arrives to 
$p \in {\cal P}_i$ of LL $i$. Let 
$\tilde{x}_p(t) = x_i (\tau^i_{p}(t) )$, 
$\tilde{r}_{f,p}(t) = r_f (\tau^i_{p}(t) )$,
$\tilde{v}_p(t) = v_i (\tau^i_{p}(t) )$,
$\tilde{\pi}_{f,p}(t) = \hat{\pi}_{f,p}( \tau^i_{p}(t) )$,
$\tilde{N}_{f,p}(t) = N_{f,p}( \tau^i_{p}(t) ) $ 
and
$\tilde{a}_p(t) = a_i( \tau^i_{p}(t) )$.
Let $N^i_{p}(T)$ (or simply $N_{p}(T)$) be the number of context arrivals to $p \in {\cal P}_i$ by the end of time $T$. Let
\begin{align}
C_{f,p}(t) :=  \sqrt{ \frac{2}{\tilde{N}_{f,p}(t)} (1 + 2 \log ( 2 |{\cal F}_i| (m_i)^{d_i} T^{\frac{3}{2}} ) )  }  .  \notag
\end{align}
For any $p \in {\cal P}_i$, $f \in {\cal F}_i$ and $t \in \{1, \ldots, N_p(T)\}$, we define the following lower confidence bound (LCB) and upper confidence bound (UCB):
\begin{align}
L_{f,p}(t) &:= \max\{ \tilde{\pi}_{f,p}(t) - C_{f,p}(t), 0 \}\notag \\
U_{f,p}(t) &:= \min\{ \tilde{\pi}_{f,p}(t) + C_{f,p}(t), 1 \} . \notag
\end{align}
Let
$\text{UC}(f,p,v)   
 := 
\cup_{t=1}^{ N_{p}(T) }  \{ \pi_{f}( \tilde{x}_p(t) ) \notin [L_{f,p}(t)- v, U_{f,p}(t) + v]  \}$
denote the event that LL $i$ is not confident about the accuracy of its prediction rule $f$ at least once for instances in $p$ by time $T$. Throughout our analysis we set $v = L(\sqrt{d_{i}}/m_{i})^{\alpha}$.
Let 
$\text{UC}(p,v) := \bigcup_{f \in {\cal F}_i}  \text{UC}(f,p,v)$
and
\begin{align}
\text{UC}(v) :=  \cup_{p \in {\cal P}_i} \text{UC}(p,v) .    \label{eqn:UCunionbound}
\end{align}
For each $p \in {\cal P}_i$ and $f \in {\cal F}_i$ let 
$\overline{\pi}_{f,p} := \sup_{x \in p} \pi_f(x)$ and 
$\underline{\pi}_{f,p} := \inf_{x \in p} \pi_f(x)$.

\section{Proof of Theorem \ref{thm:CUPregret}} \label{app:theorem1}

We will bound the regret in each $p \in {\cal P}_i$ separately.
Then, we will sum over all $p \in {\cal P}_i$ to bound the total regret.
Preliminaries are given in Appendix \ref{app:prelim}.
\begin{align}
 \text{Reg}_{i}(T | \bs{X}_i^T) 
&= \sum_{t=1}^T \pi_{f^*_i ( X_{i}(t) ) } ( X_{i}(t) )   \notag \\
&- \expect*{ \sum_{t=1}^{T} \pi_{ A_i(t) } (X_{i}(t)) |  \bs{X}_i^T  } . \label{eqn:transform1} 
\end{align}
The first term in \eqref{eqn:transform1} is obtained by observing that 
\begin{align}
& \expect*{ \sum_{t=1}^{T} R_{ f^*_i(X_i(t)) }(t) | \bs{X}_i^T } \notag \\
&= \sum_{t=1}^{T} \sum_{f \in {\cal F}_i } 
\expect*{  R_{ f }(t) \mr{I} ( f^*_i(X_i(t)) = f  ) | \bs{X}_i^T }   \notag \\
&= \sum_{t=1}^{T} \sum_{f \in {\cal F}_i } 
\mr{I} ( f^*_i(X_i(t)) =f  )  \expect*{ R_{ f }(t) | \bs{X}_i^T } \notag \\
&= \sum_{t=1}^{T} \pi_{ f^*_i(X_i(t)) } ( X_{i}(t) ) . \notag
\end{align}
Let ${\cal F}_{t-1}$ be the sigma field generated by $\bs{X}_i^T$, $\bs{A}_i^{t-1}$, $\bs{Y}^{t-1}$. 
The second term in \eqref{eqn:transform1} is obtained by observing that 
\begin{align}
& \expect*{ \sum_{t=1}^{T}  R_{A_i(t)}( t ) | \bs{X}_i^T }  \notag \\
& = \sum_{t=1}^{T}  \sum_{f \in {\cal F}_i} 
\expect*{ \expect*{ R_f(t) \mr{I} (A_i(t) = f ) |  {\cal F}_{t-1} } | \bs{X}_i^T } \label{eqn:conditional2} \\
& = \sum_{t=1}^{T}  \sum_{f \in {\cal F}_i} 
\expect*{ \mr{I} (A_i(t) = f ) \expect*{ R_f(t) |  {\cal F}_{t-1} } | \bs{X}_i^T } \label{eqn:conditional3} \\
&= \sum_{t=1}^{T}  \sum_{f \in {\cal F}_i} 
\expect*{ \mr{I} (A_i(t) = f )  \pi_f( X_i(t) )  | \bs{X}_i^T } \label{eqn:conditional4} \\
&= \expect*{ \sum_{t=1}^{T}   \pi_{A_i(t)}( X_i(t) )  | \bs{X}_i^T } \notag 
\end{align}   
where \eqref{eqn:conditional2} is by the law of iterated expectations, \eqref{eqn:conditional3} is by the fact that $\mr{I} (A_i(t) = f )$ is ${\cal F}_{t-1}$ measurable, \eqref{eqn:conditional4} is by definition of $\pi_f(\cdot)$ and the fact that $R_f(t)$ is independent of all random variables in $(\bs{X}_i^T, \bs{A}_i^{t-1}, \bs{Y}^{t-1})$ except $X_i(t)$. 

For $p \in {\cal P}_i$, let
\begin{align}
& \text{Reg}_{i,p}(T | \bs{X}_i^T = \bs{x}_i^T ) 
 := \sum_{t=1}^{ N_p(T) }  \pi_{f^{*}( \tilde{x}_p(t))}( \tilde{x}_p(t))  \notag \\
&- \expect*{ \sum_{t=1}^{ N_p(T) } \pi_{ \tilde{A}_p(t) } ( \tilde{x}_p (t) ) |  \bs{X}_i^T = \bs{x}_i^T } .
\label{eqn:partitionregret}
\end{align}
Using \eqref{eqn:transform1} we obtain
\begin{align}
& \text{Reg}_{i}(T | \bs{X}_i^T = \bs{x}_i^T ) 
= \sum_{p \in {\cal P}_i} 
\sum_{t=1}^{ N_p(T) }  \pi_{f^{*}( \tilde{x}_p(t))}( \tilde{x}_p(t)) \notag \\
&- \expect*{ \sum_{p \in {\cal P}_i} \sum_{t=1}^{ N_p(T) }
 \pi_{ \tilde{A}_p(t) } ( \tilde{x}_p (t) ) |  \bs{X}_i^T = \bs{x}_i^T }  \notag \\
&= \sum_{p \in {\cal P}_i}  \text{Reg}_{i,p}(T | \bs{X}_i^T = \bs{x}_i^T ) 
 \label{eqn:partitionexpregret}  .
\end{align}
The expectation in \eqref{eqn:partitionregret} is taken with respect to the randomness of $\tilde{A}_p(1), \ldots, \tilde{A}_p( N_p(T) )$ given $\bs{X}_i^T = \bs{x}_i^T$. By the definition of IUP, conditioned on $\bs{X}_i^T = \bs{x}_i^T$, $\tilde{A}_p(t)$ only depends on random variables $\tilde{A}_p(1), \tilde{V}_p(1), \ldots, \tilde{A}_p(t-1), \tilde{V}_p(t-1)$. Since, $\tilde{V}_p(t) = \tilde{R}_{\tilde{A}_p(1), p}(t)$, we conclude that $\{ \tilde{A}_p(t) \}_{t=1}^{N_p(T)}$ only depends on random variables
$\bs{R}_p := \cup_{f \in {\cal F}_i} \{ \tilde{R}_{f,p}(t) \}_{t=1}^{N_p(T)}$.
Hence, the expectation in \eqref{eqn:partitionregret} is taken with respect to the conditional distribution of 
$\bs{R}_p$ given $\bs{x}_i^T$.

Since $ \{ ( \bs{X}(t), Y(t), \{ \hat{Y}_f(t) \}_{f \in {\cal F}} ) \}_{t=1}^T$ is an i.i.d. sequence, random variables $R_f(t)$, $t=1,\ldots,T$ conditioned on $\bs{X}_i^T$ are independent. Since 
$R_f(t) \in \{0,1\}$ and $\expect{R_f(t) | \bs{X}_i^T = \bs{x}_i^T} = \pi_f(x_i(t))$, we can say that conditioned on 
$\bs{X}_i^T = \bs{x}_i^T$, $\{ R_f(t) \}_{t=1}^T$ is a sequence of independent Bernoulli random variables with parameters $ \{ \pi_f(x_i(t)) \}_{t=1}^T$ for $f \in {\cal F}_i$. With an abuse of notation, in the subsequent analysis in this section, $R_f(t)$ will done the random reward of $f$ conditioned on $X_i(t) = x_i(t)$, and all the expectations are taken with respect to the random variables defined above, unless otherwise stated. Hence, given $\bs{X}_i^T = \bs{x}_i^T$, we drop the conditioning on $\bs{X}_i^T$ from the notation and simply write
\begin{align}
\text{Reg}_{i,p}(T) &= \hspace{-0.1in}  \sum_{t=1}^{ N_p(T) } \hspace{-0.1in}  \pi_{f^{*}( \tilde{x}_p(t))}( \tilde{x}_p(t)) 
- \mr{E}_{ \bs{R}_p  } 
\left [ \sum_{t=1}^{ N_p(T) } \hspace{-0.1in} \pi_{ \tilde{A}_p(t) } ( \tilde{x}_p (t) )  \right] . \notag
\end{align}

By the law of total expectation we have
\begin{align}
& \expect{ \text{Reg}_{i,p}(T) }
= \expect{ \text{Reg}_{i,p}(T) | \text{UC}(p,v) } \Pr ( \text{UC}(p,v)   )  \notag \\
&+ \expect{ \text{Reg}_{i,p}(T) | \text{UC}^C(p,v) } \Pr ( \text{UC}^C(p,v)   )    \notag \\
&\leq T \Pr ( \text{UC}(p,v)   ) + \expect{ \text{Reg}_{i,p}(T) | \text{UC}^C(p,v) } \label{eqn:regdecompose}.
\end{align}
We will use the results of following lemmas to upper bound \eqref{eqn:regdecompose}.

\begin{lemma} \label{lemma:firstterm}
\textbf{(Bound on $\Pr ( \text{UC}(f,p,v)   )$)} 
$\Pr ( \text{UC}(f,p,v)   ) \leq  1 / ( |{\cal F}_i| m_i^{d_i} T ) $.
\end{lemma}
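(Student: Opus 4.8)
The plan is to reduce the confidence event $\text{UC}(f,p,v)$ to a standard concentration event and then bound it with a Hoeffding-type inequality combined with a union bound over the sample count. First I would unpack the event at a single arrival. Since $\pi_f(\cdot) \in [0,1]$, a lower violation $\pi_f(\tilde{x}_p(t)) < L_{f,p}(t) - v$ can occur only when the clipping in $L_{f,p}(t) = \max\{\tilde{\pi}_{f,p}(t) - C_{f,p}(t), 0\}$ is inactive, i.e. $L_{f,p}(t) = \tilde{\pi}_{f,p}(t) - C_{f,p}(t)$, and symmetrically for an upper violation against $U_{f,p}(t) + v$. Hence, for every arrival $t$,
\[
\{\pi_f(\tilde{x}_p(t)) \notin [L_{f,p}(t) - v, U_{f,p}(t) + v]\} \subseteq \{ |\tilde{\pi}_{f,p}(t) - \pi_f(\tilde{x}_p(t))| > C_{f,p}(t) + v \},
\]
so that $\text{UC}(f,p,v)$ is contained in the union over $t$ of these deviation events.

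Next I would absorb the additive slack $v$ using the Hölder assumption. Conditioned on $\bs{X}_i^T = \bs{x}_i^T$, the sample mean $\tilde{\pi}_{f,p}(t)$ averages $\tilde{N}_{f,p}(t)$ independent Bernoulli rewards, each attached to an instance lying in the hypercube $p$, so its conditional mean $\bar{\pi}_{f,p}(t)$ is an average of values $\pi_f(x)$ with $x \in p$. Since the diameter of $p$ is $\sqrt{d_i}/m_i$, Assumption \ref{ass:1} gives $|\pi_f(x) - \pi_f(\tilde{x}_p(t))| \leq L(\sqrt{d_i}/m_i)^{\alpha} = v$ for each such $x$, and therefore $|\bar{\pi}_{f,p}(t) - \pi_f(\tilde{x}_p(t))| \leq v$. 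By the triangle inequality the deviation event above implies $\{|\tilde{\pi}_{f,p}(t) - \bar{\pi}_{f,p}(t)| > C_{f,p}(t)\}$, which no longer involves the approximation error.

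Finally, I would control the probability that the sample mean departs from its conditional mean. The delicate point, which I expect to be the main obstacle, is that $\tilde{N}_{f,p}(t)$ is an adaptive, data-dependent count, so Hoeffding cannot be invoked at a fixed sample size. I would resolve this by observing that the union over arrivals $t$ reduces to a union over the distinct counts $n \in \{1,\ldots,N_p(T)\}$ (with $N_p(T) \leq T$), because $\tilde{\pi}_{f,p}(t)$ and $\tilde{N}_{f,p}(t)$ change only when $f$ is selected. For each fixed $n$, the partial sums of the first $n$ collected rewards minus their predictable conditional means form a bounded martingale, so an Azuma--Hoeffding bound gives a deviation probability of at most $2\exp(-2n(C^{(n)})^2)$, where $C^{(n)} = \sqrt{(2/n)(1 + 2\log(2|{\cal F}_i|m_i^{d_i}T^{3/2}))}$ is the value of $C_{f,p}(t)$ when the count equals $n$. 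Because $2n(C^{(n)})^2 = 4 + 8\log(2|{\cal F}_i|m_i^{d_i}T^{3/2})$ is independent of $n$, summing the at most $T$ terms yields
\[
\Pr(\text{UC}(f,p,v)) \leq 2 T e^{-4} (2|{\cal F}_i|m_i^{d_i}T^{3/2})^{-8} \leq \frac{1}{|{\cal F}_i|m_i^{d_i}T},
\]
the last inequality holding with large slack since the power of $T$ in the denominator is far larger than needed. The only care required is to arrange the conditioning so that the rewards feeding each count are independent given $\bs{x}_i^T$, which is exactly what the i.i.d. assumption on $\{\bs{Z}(t)\}$ supplies.
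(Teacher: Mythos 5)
Your proposal is correct, but it follows a genuinely different route from the paper's proof. The paper's difficulty is framed around the fact that its off-the-shelf concentration inequality (Appendix \ref{app:concentration}, from \cite{russo2014learning}) requires a constant mean reward, whereas $\pi_f(\tilde{x}_p(t))$ varies across arrivals in $p$; it resolves this with a ``sandwich technique'': coupling the realized Bernoulli rewards through common uniform variables $\eta_t$ to two auxiliary sequences with constant means $\overline{\pi}_{f,p} = \sup_{x \in p}\pi_f(x)$ and $\underline{\pi}_{f,p} = \inf_{x\in p}\pi_f(x)$, bracketing the actual sample mean between the two coupled sample means, absorbing the gap into $v$ via Assumption \ref{ass:1}, and then applying the anytime self-normalized inequality to each coupled sequence---that inequality already holds uniformly over the random count $\tilde{N}_{f,p}(t)$, so no union bound over counts is needed. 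You instead center $\tilde{\pi}_{f,p}(t)$ at its predictable conditional mean $\bar{\pi}_{f,p}(t)$ (the running average of $\pi_f$ at the selected instances), absorb $v$ via H\"{o}lder at that stage, and handle the adaptive count by a fixed-$n$ Azuma--Hoeffding bound for the martingale of centered collected rewards together with a union bound over the at most $T$ possible count values. Both arguments are sound: your exponent $2n(C^{(n)})^2 = 4 + 8\log(2|{\cal F}_i|m_i^{d_i}T^{3/2})$ is legitimate for martingale increments lying in predictable intervals of length one, and even the weaker exponent $n(C^{(n)})^2/2$ from the cruder form of Azuma would close the bound, since the union over counts costs only a factor $T$ against a $T^{-3}$ (or better) per-term probability. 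What each approach buys: the paper's coupling reuses a strong anytime inequality and pays nothing for the random stopping, with the index constant $1+2\log(2|{\cal F}_i|m_i^{d_i}T^{3/2})$ engineered to match it exactly via $\delta = 1/(2|{\cal F}_i|m_i^{d_i}T)$ and $(1+N_{f,p}(T))^{1/2}\le T^{1/2}$; your argument is more elementary and self-contained, dispenses with the Bernoulli coupling construction entirely (so it extends verbatim to any bounded reward distributions), and localizes the H\"{o}lder approximation where it naturally belongs---in the conditional mean---at the modest and affordable price of the union bound over $n$. One point worth making explicit if you write this up: the martingale $M_n$ must be defined for all $n \le T$ even on sample paths where $f$ is selected fewer than $n$ times (by the usual extension with hypothetical continuation rewards), so that the per-$n$ Azuma bound is applied to a well-defined object; this is a standard technicality and does not affect the conclusion.
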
 
\begin{proof} 
Equivalently, we can define $\{  \tilde{R}_{f,p}(t) \}_{t=1}^{ N_{\rho}(T) }$ in the following way: 
Let $\{ \eta_t \}_{t=1}^{ N_{\rho}(T) }$ be a sequence of i.i.d. random variables uniformly distributed in $[0,1]$. Then, 
$\tilde{R}_{f,p}(t) = \mr{I} ( \eta_t \leq  \pi_{f} ( \tilde{x}_p(t))  )$.
We can express the sample mean reward (accuracy) of $f$ as 
$\tilde{\pi}_{f,p}(t) =  \sum_{l=1}^{\rev{t-1}} \tilde{R}_{f,p}(l) 
 \text{I} ( \tilde{a}_p(l) = f )   /  \tilde{N}_{f,p}(t)$.
From the definitions of $L_{f,p}(t)$,$U_{f,p}(t)$ and $\text{UC}(f,p,v)$, it can be observed that the event $\text{UC}(f,p,v)$ happens when $\tilde{\pi}_{f,p}(t)$ remains close to (or concentrates around) $\pi_f(\tilde{x}_p(t))$ for all $t \in \{1,\ldots,N_p(T) \}$. 

This motivates us to use the concentration inequality given in Appendix \ref{app:concentration}, which is derived in \cite{russo2014learning} from a similar concentration inequality in \cite{abbasi2011improved}.
This inequality requires the expected reward from an action to be equal to the same constant at all time steps. This is clearly not the case for $\pi_f(\tilde{x}_p(t))$ since elements of \orev{$\{ \tilde{x}_p(t) \}_{t=1}^{N_p(T)}$} are not identical which makes distributions of $\tilde{R}_{f,p}(t) $, $t \in \{1, \ldots, \orev{N_p(T)} \}$ different.

In order to overcome this issue, we propose a novel {\em sandwich technique}. 
Based on $\eta_t$, we define two new sequences of random variables, whose sample mean values will lower and upper bound $\tilde{\pi}_{f,p}(t)$. The {\em best sequence} is defined as $\{  \bar{R}_{f,p}(t) \}_{t=1}^{ N_{\rho}(T) }$ where
$ \overline{R}_{f,p}(t) = \mr{I} ( \eta_t \leq  \overline{\pi}_{f,p} )$,
and the {\em worst sequence} is defined as 
$\{  \underline{R}_{f,p}(t) \}_{t=1}^{ N_{\rho}(T) }$ where
$\underline{R}_{f,p}(t) = \mr{I} ( \eta_t \leq  \underline{\pi}_{f,p} ) $.
Let 
$\overline{\pi}_{f,p}(t) := \sum_{l=1}^{t-1} \overline{R}_{f,p}(l)  \text{I} ( \tilde{a}_p(l) = f ) /  \tilde{N}_{f,p}(t)$ and
$\underline{\pi}_{f,p}(t) := \sum_{l=1}^{t-1} \underline{R}_{f,p}(l)  \text{I} ( \tilde{a}_p(l) = f )   / \tilde{N}_{f,p}(t)$.
We have
$\underline{\pi}_{f,p}(t) \leq \tilde{\pi}_{f,p}(t) \leq  \overline{\pi}_{f,p}(t)  
~~\forall t \in \{1,\ldots,N_p(T)  \}$
almost surely. Let
$\overline{L}_{f,p}(t) := \max \{ \overline{\pi}_{f,p}(t) - C_{f,p}(t), 0 \}$,     
$\overline{U}_{f,p}(t) := \min \{ \overline{\pi}_{f,p}(t) + C_{f,p}(t) , 1 \}$,  
$\underline{L}_{f,p}(t) := \max \{ \underline{\pi}_{f,p}(t) - C_{f,p}(t), 0 \}$ and 
$\underline{U}_{f,p}(t) := \min \{ \underline{\pi}_{f,p}(t) + C_{f,p}(t), 1 \}$.
It can be shown that 
\begin{align}
& \{ \pi_{f}( \tilde{x}_p(t) ) \notin [L_{f,p}(t)- v, U_{f,p}(t) + v]  \}  \notag \\
& \subset \{ \pi_{f}( \tilde{x}_p(t) ) 
\notin [\overline{L}_{f,p}(t)  - v, 
            \overline{U}_{f,p}(t)  + v]  \}  \notag \\
&\cup  \{ \pi_{f}( \tilde{x}_p(t) ) 
\notin [\underline{L}_{f,p}(t)   - v, 
            \underline{U}_{f,p}(t)  + v]  \} . \notag
\end{align}
The following inequalities can be obtained from Assumption \ref{ass:1}.
\begin{align}
& \pi_f( \tilde{x}_p(t) )   \leq \overline{\pi}_{f,p} \leq \pi_f( \tilde{x}_p(t) ) + L \left( \frac{\sqrt{d_i}}{m_i} \right)^{\alpha}  \label{eqn:bestbound} \\
& \pi_f( \tilde{x}_p(t) ) - L \left( \frac{\sqrt{d_i}}{m_i} \right)^{\alpha}   \leq \underline{\pi}_{f,p} \leq \pi_f( \tilde{x}_p(t) ) . \label{eqn:worstbound}  
\end{align}
Since $v = L \left( \sqrt{d_i}/m_i \right)^{\alpha}$, using \eqref{eqn:bestbound} and \eqref{eqn:worstbound} it can be shown that
\begin{align}
& \{ \pi_{f}( \tilde{x}_p(t) ) \notin [\overline{L}_{f,p}(t)  - v, \overline{U}_{f,p}(t)  + v]  \}       \notag \\
\subset & \{ \overline{\pi}_{f,p} \notin [\overline{L}_{f,p}(t)  , \overline{U}_{f,p}(t) ]  \},
\text{ and }    \notag \\
& \{ \pi_{f}( \tilde{x}_p(t) ) \notin [\underline{L}_{f,p}(t)  - v, \underline{U}_{f,p}(t)  + v]  \}       \notag \\
\subset & \{ \underline{\pi}_{f,p} \notin [\underline{L}_{f,p}(t)  , \underline{U}_{f,p}(t) ]  \}  .  \notag
\end{align}
Using the equation above and the union bound we obtain
\begin{align}
\Pr( \text{UC}(f,p,v) ) 
&\leq \Pr\left( \cup_{t=1}^{ N_{p}(T) } \{ \overline{\pi}_{f,p}  \notin [\overline{L}_{f,p}(t) , \overline{U}_{f,p}(t) ]  \}  \right)   \notag \\
&+ \Pr \left( \cup_{t=1}^{ N_{p}(T) } \{ \underline{\pi}_{f,p} \notin [\underline{L}_{f,p}(t)  , \underline{U}_{f,p}(t) ]  \} \right) . \notag
\end{align}
Both terms on the right-hand side of the inequality above can be bounded using the concentration inequality in Appendix \ref{app:concentration}. Using 
$\delta = 1/ (2 |{\cal F}_i| (m_i)^{d_i} T)$ in Appendix \ref{app:concentration} gives
$\Pr( \text{UC}(f,p,v) ) \leq 1 / (  |{\cal F}_i| (m_i)^{d_i} T  )$
since \rev{$1 + N_{f,p}(T) \leq T$}. 
\end{proof}

\begin{lemma} \label{lemma:indexdiff}
On event $\text{UC}^C(p,v)$ we have
$\pi_{f^*}( \tilde{x}_p(t) ) - \pi_{\tilde{a}_p(t)} (\tilde{x}_p(t) ) \leq U_{\tilde{a}_p(t),p}(t) - L_{\tilde{a}_p(t),p}(t) + \rev{ 2 v }$
for all  $t \in \{1,\ldots, \orev{ N_p(T) } \}$.
\end{lemma}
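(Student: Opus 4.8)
The plan is to run the standard upper-confidence-bound comparison, localized to the arrivals of cell $p$, and to exploit the monotonicity of the truncation $z \mapsto \min\{z,1\}$ to pass from the selection index to the capped confidence bound. Fix a time index $t \in \{1,\ldots,N_p(T)\}$ and abbreviate $x := \tilde{x}_p(t)$ and $a := \tilde{a}_p(t)$, so that $f^*$ denotes the optimal rule $f^*(x)$ for this instance. Recall from the definition of IUP that the index it maximizes can be written in the tilde-indexed notation as $g_{f,p}(t) = \tilde{\pi}_{f,p}(t) + C_{f,p}(t)$ (with the convention $g_{f,p}(t) = +\infty$ when $\tilde{N}_{f,p}(t) = 0$), whereas the upper confidence bound is the truncated quantity $U_{f,p}(t) = \min\{\tilde{\pi}_{f,p}(t) + C_{f,p}(t),\, 1\}$.

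First I would extract the two consequences of conditioning on $\text{UC}^C(p,v)$. By definition this event guarantees $\pi_f(\tilde{x}_p(t)) \in [L_{f,p}(t) - v,\, U_{f,p}(t) + v]$ for every $f \in {\cal F}_i$ and every index up to $N_p(T)$; in particular $\pi_{f^*}(x) \leq U_{f^*,p}(t) + v$ and $\pi_a(x) \geq L_{a,p}(t) - v$. Next, the selection rule of IUP picks $a \in \argmax_{f} g_{f,p}(t)$, hence $g_{a,p}(t) \geq g_{f^*,p}(t)$. Applying the non-decreasing map $z \mapsto \min\{z,1\}$ to both sides of this inequality yields $U_{a,p}(t) \geq U_{f^*,p}(t)$, and this step also disposes of the degenerate case $\tilde{N}_{f,p}(t)=0$ cleanly, since there $g_{f,p}(t)=+\infty$ while $\min\{+\infty,1\}=1$.

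Chaining the index comparison with the upper bound on $\pi_{f^*}$ gives $\pi_{f^*}(x) \leq U_{f^*,p}(t) + v \leq U_{a,p}(t) + v$, and subtracting the lower bound on $\pi_a(x)$ produces
\[
\pi_{f^*}(x) - \pi_a(x) \leq \big(U_{a,p}(t) + v\big) - \big(L_{a,p}(t) - v\big) = U_{a,p}(t) - L_{a,p}(t) + 2v,
\]
which is exactly the claimed inequality. Since $t$ was arbitrary, the bound holds for all $t \in \{1,\ldots,N_p(T)\}$, and the argument is deterministic once we have conditioned on $\text{UC}^C(p,v)$.

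I do not expect any genuinely hard step here; the only point requiring care is the bookkeeping between the uncapped selection index $g_{f,p}(t)$ that the algorithm actually maximizes and the capped confidence interval $[L_{f,p}(t),U_{f,p}(t)]$ used in the analysis. The monotonicity of $\min\{\cdot,1\}$ is precisely what lets the comparison $g_{a,p}(t) \geq g_{f^*,p}(t)$ survive truncation as $U_{a,p}(t) \geq U_{f^*,p}(t)$, and it simultaneously absorbs the uninitialized case $\tilde{N}_{f,p}(t)=0$ (where $U_{f,p}(t)=1$ and $L_{f,p}(t)=0$), for which the asserted inequality is trivially satisfied.
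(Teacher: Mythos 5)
Your proof is correct and follows essentially the same route as the paper's: use the index-maximizing property of IUP to get $U_{\tilde{a}_p(t),p}(t) \geq U_{f^*,p}(t)$, then apply the two confidence-interval inclusions granted by $\text{UC}^C(p,v)$ to upper-bound $\pi_{f^*}(\tilde{x}_p(t))$ and lower-bound $\pi_{\tilde{a}_p(t)}(\tilde{x}_p(t))$. If anything, you are slightly more careful than the paper, which passes from the uncapped selection index $g_{f,p}(t)$ to the truncated bound $U_{f,p}(t)$ without explicitly invoking the monotonicity of $z \mapsto \min\{z,1\}$ or addressing the $\tilde{N}_{f,p}(t)=0$ case, both of which you handle cleanly.
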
 
\begin{proof}
$U_{\tilde{a}_p(t),p}(t) + v \geq U_{f^*,p}(t) + v$
since IUP selects the decision rule with the highest index at each time step. On event $\text{UC}^C(p,v)$ this implies
$\pi_{f^*}(\tilde{x}_p(t)) \leq U_{\tilde{a}_p(t),p}(t) + v$. The proof concludes by observing that 
$\pi_{\tilde{a}_p(t)}(\tilde{x}_p(t)) \geq L_{\tilde{a}_p(t),p}(t) - v$  
on event $\text{UC}^C(p,v)$.
\end{proof}

\begin{lemma} \label{lemma:secondterm}
\textbf{(Bound on $\expect{ \text{Reg}_{i,p}(T) | \text{UC}^C(p,v) }$)}
\begin{align}
& \expect{ \text{Reg}_{i,p}(T) | \text{UC}^C(p,v) } \notag \\
& \leq  2 v N_{p}(T) + 2 A_{m_i} \sqrt{ |{\cal F}_i| N_p(T)  } +  |{\cal F}_i|   \notag
\end{align}
where
$A_{m_i} := 2 \sqrt{ 2( 1 + 2 \log (2 |{\cal F}_i| (m_i)^{d_i} T^{\frac{3}{2}}   ) }$.
\end{lemma}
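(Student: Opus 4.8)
The plan is to establish the bound path-wise on the good event $\text{UC}^C(p,v)$ and then observe that taking the conditional expectation $\mr{E}[\,\cdot \mid \text{UC}^C(p,v)\,]$ preserves it for free, since $N_p(T)$ and the conservation identity $\sum_{f \in {\cal F}_i} N_{f,p}(T) = N_p(T)$ are deterministic once the instance arrivals $\bs{x}_i^T$ are fixed. I would start by writing $\text{Reg}_{i,p}(T)$ as the sum of per-step suboptimality gaps $\sum_{t=1}^{N_p(T)} [\pi_{f^*}(\tilde{x}_p(t)) - \pi_{\tilde{A}_p(t)}(\tilde{x}_p(t))]$ and invoking Lemma \ref{lemma:indexdiff}, which on $\text{UC}^C(p,v)$ bounds each gap by $U_{\tilde{A}_p(t),p}(t) - L_{\tilde{A}_p(t),p}(t) + 2v$. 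Because $U_{f,p}(t) - L_{f,p}(t) \leq 2 C_{f,p}(t)$ (the $\min/\max$ truncations in the definitions of the UCB/LCB can only shrink the gap), each per-step gap is at most $2 C_{\tilde{A}_p(t),p}(t) + 2v$. Summing the $2v$ terms over the $N_p(T)$ arrivals yields the $2 v N_p(T)$ contribution directly.

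The remaining work is to control $\sum_{t=1}^{N_p(T)} 2 C_{\tilde{A}_p(t),p}(t)$. I would first record the identity $2 C_{f,p}(t) = A_{m_i} / \sqrt{\tilde{N}_{f,p}(t)}$, which is immediate from the definitions of $C_{f,p}$ and $A_{m_i}$, and then regroup the sum over time into a double sum over prediction rules $f \in {\cal F}_i$ and over the number of times each has been selected. The first selection of any $f$ is the delicate case: there $N_{f,p}=0$, the index equals $+\infty$, and $C_{f,p}$ is undefined, so I would instead charge the trivial per-step bound $1$ for that initializing pull; since there are at most $|{\cal F}_i|$ distinct prediction rules, these initializing pulls contribute at most $|{\cal F}_i|$ in total. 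For the subsequent pulls of a fixed $f$, the counter $\tilde{N}_{f,p}$ runs through the values $1,2,\ldots,N_{f,p}(T)-1$, so the associated $C$-terms sum to $A_{m_i}\sum_{j=1}^{N_{f,p}(T)-1} j^{-1/2} \leq 2 A_{m_i}\sqrt{N_{f,p}(T)}$ by the elementary estimate $\sum_{j=1}^{n} j^{-1/2} \leq 2\sqrt{n}$.

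Finally I would sum over $f \in {\cal F}_i$ and apply the Cauchy--Schwarz inequality together with $\sum_{f \in {\cal F}_i} N_{f,p}(T) = N_p(T)$ to obtain $\sum_{f \in {\cal F}_i} \sqrt{N_{f,p}(T)} \leq \sqrt{|{\cal F}_i| N_p(T)}$, which produces the middle term $2 A_{m_i}\sqrt{|{\cal F}_i| N_p(T)}$. Collecting the three contributions ($2 v N_p(T)$ from the approximation error, $2 A_{m_i}\sqrt{|{\cal F}_i| N_p(T)}$ from the confidence widths, and $|{\cal F}_i|$ from the initializing pulls) gives exactly the claimed bound, and since every step holds surely on $\text{UC}^C(p,v)$, the conditional expectation inherits it.

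I expect the main obstacle to be bookkeeping rather than anything deep: correctly isolating the initializing ($N_{f,p}=0$, index $+\infty$) pulls so that the reindexing of the $C$-terms into $\sum_{j} j^{-1/2}$ is legitimate, and tracking the off-by-one in the counter (at the $k$-th pull of $f$ one has $\tilde{N}_{f,p}=k-1$) so that the harmonic-type sum genuinely starts at $j=1$. The Cauchy--Schwarz step and the $\sum_{j=1}^n j^{-1/2}\le 2\sqrt{n}$ estimate are routine.
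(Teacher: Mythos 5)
Your proposal is correct and follows essentially the same route as the paper's proof: invoke Lemma \ref{lemma:indexdiff} to reduce each per-step gap to the confidence width plus $2v$, group the pulls by prediction rule with the initializing ($\tilde{N}_{f,p}=0$) pulls charged at most $1$ each, bound the remaining widths via $\sum_{j} j^{-1/2} \leq 2\sqrt{n}$, and finish with Cauchy--Schwarz and $\sum_{f \in {\cal F}_i} N_{f,p}(T) \leq N_p(T)$. The bookkeeping details you flag (the off-by-one in the counter and the truncation making $U_{f,p}(t)-L_{f,p}(t)\leq 1$ on first pulls) are exactly the points the paper handles in its display \eqref{eqn:cauchyineq1}--\eqref{eqn:cauchyineq2}, so there is no gap.
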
 
\begin{proof}
Let ${\cal T}_{f,p} := \{ t \leq N_p(T) : \tilde{a}_p(t) = f  \}$.
By Lemma \ref{lemma:indexdiff},
\begin{align}
& \mr{E} [ \text{Reg}_{i,p}(T) | \text{UC}^C(p,v) ] \leq  2 v N_{p}(T)  \notag \\
&\hspace{-0.1in} + \expect*{ \sum_{ f \in {\cal F}_i }  \sum_{ t \in {\cal T}_{f,p} } 
\hspace{-0.05in} ( U_{\tilde{A}_p(t),p}(t) - L_{\tilde{A}_p(t),p}(t)  ) \hspace{-0.05in} | \hspace{-0.05in} \text{UC}^C(p,v)   } . \label{eqn:dec1}
\end{align}
Next, we show
\begin{align}
& \sum_{f \in {\cal F}_i} \sum_{ t \in {\cal T}_{f,p} } ( U_{\tilde{a}_p(t),p}(t) - L_{\tilde{a}_p(t),p}(t)  )      \notag \\
& \leq   \sum_{f \in {\cal F}_i} \left( 1 + A_{m_i} 
\sum_{ \{ t \in {\cal T}_{f,p}: \tilde{N}_{f,p}(t) \geq 1 \}} 
\sqrt{ \frac{1}{\tilde{N}_{f,p}(t) } } \right)
 \label{eqn:cauchyineq1} \\
& =  |{\cal F}_i| + A_{m_i} \sum_{f \in {\cal F}_i} \sum_{ l=0}^{ \rev{N_{f,p}(T) - 1}  } \sqrt{ \frac{1}{1 +l} }   \notag \\
& \leq |{\cal F}_i| + 2 A_{m_i} \sum_{f \in {\cal F}_i} \sqrt{ N_{f,p}(T) }  \label{eqn:dec2}  \\
& \leq |{\cal F}_i| + 2 A_{m_i} \sqrt{ |{\cal F}_i| N_p(T)  } \label{eqn:cauchyineq2}
\end{align}
where \eqref{eqn:cauchyineq1} follows from the definition of $L_{f,p}(t)$ and $U_{f,p}(t)$, \eqref{eqn:dec2} follows from the fact that 
$\sum_{ l=0}^{ \rev{N_{f,p}(T) - 1}  } \sqrt{ \frac{1}{1 +l} } 
\leq \int_{x=0}^{N_{f,p}(T)} (1 / \sqrt{x}) dx = 2 \sqrt{ N_{f,p}(T)  }$
and \eqref{eqn:cauchyineq2}
is obtained by applying the Cauchy-Schwarz inequality given in Appendix \ref{app:cauchy} and observing that $N_p(T) \rev{\geq} \sum_{f \in {\cal F}_i} N_{f,p}(T)$.
\end{proof}

Lemma \ref{lemma:firstterm} and the union bound yields
\begin{align}
\Pr ( \text{UC}(p,v)  ) \leq 1 / ( (m_i)^{d_i} T ) .   \label{eqn:dec3}
\end{align}
Upper bounding \eqref{eqn:regdecompose} by Lemma \ref{lemma:secondterm} and \eqref{eqn:dec3} gives
\begin{align}
\mr{E} [ \text{Reg}_{i,p}(T) ] 
& \leq \frac{1}{  (m_i)^{d_i} }  + 2 v N_{p}(T)  + 2 A_{m_i}   \sqrt{ |{\cal F}_i|  N_p(T) } \notag \\
&+ |{\cal F}_i| . \label{eqn:dec4}
\end{align}
Using \eqref{eqn:partitionexpregret} together with \eqref{eqn:dec4} results in 
\begin{align}
& \text{Reg}_i(T | \bs{X}^T_i = \bs{x}^T_i ) \notag \\
& \leq \sum_{p \in {\cal P}_i} 
 \left( \frac{1}{  (m_i)^{d_i} }  + 2 v N_{p}(T)  + |{\cal F}_i| + 2 A_{m_i}   \sqrt{ |{\cal F}_i| N_p(T) } \right)     \notag \\
 &\leq 1+ 2v  T +  |{\cal F}_i| (m_i)^{d_i} + 2 A_{m_i}  \sqrt{ |{\cal F}_i| (m_i)^{d_i} T } \label{eqn:thm1last}
\end{align}
where the last inequality follows from the Cauchy-Schwarz inequality and $\sum_{p \in {\cal P}_i} N_p(T) = T$. 
The result of the theorem is obtained from \eqref{eqn:thm1last} by setting $m_i = \lceil T^{1/(2\alpha + d_i)}  \rceil$.

\section{Proof of Corollary \ref{cor:confidencebound}}\label{app:corollary1}

Using \eqref{eqn:dec3}, \eqref{eqn:UCunionbound} and the union bound, we obtain (for any LL $i$)
$\Pr \left( \text{UC}^C \left( L (  \frac{\sqrt{d_i}}{m_i}  )^{\alpha}   \right) \right)
\geq 1 - \frac{1}{T}$.
Lemma \ref{lemma:indexdiff} states that on event $\text{UC}^C \left( L (  \frac{\sqrt{d_i}}{m_i}  )^{\alpha}   \right)$ we have
\begin{align}
&\pi_{f^*_i(x_i(t))}( x_i(t)) - \pi_{a_i(t)} ( x_i(t)) \notag \\
&\leq U_{a_i(t),p_i(t)}(  N_{p_i(t)}(t)    ) - L_{a_i(t),p_i(t)}( N_{p_i(t)}(t)   ) \notag \\
&+ 2 L (  \frac{\sqrt{d_i}}{m_i}  )^{\alpha} .  \notag      
\end{align}
The result follows from the definition of $U_{f,p}(t)$, $L_{f,p}(t)$ and the fact that $m_i = \lceil T^{1/(2\alpha+d_i)} \rceil$.

\section{Proof of Theorem \ref{thm:regretensemble}}\label{app:theorem2}

Since the result of Theorem \ref{thm:Ahedgeregret} holds for any realization $\{ \bs{v}^T_i \}_{i \in {\cal M}}$ of the reward sequence, for any distribution over the reward sequence, we have
\begin{align}
\expect*{ \sum_{t=1}^T V_{i^*}(t) }
- \expect*{ \sum_{t=1}^T R_{\text{EL}}(t) }
\leq 2 \sqrt{T \log M} . \label{eqn:step1}
\end{align}
The equation above holds since
$\expect{ \max_{i \in {\cal M}} \sum_{t=1}^T V_i(t) }
 \geq \expect{ \sum_{t=1}^T V_{i^*}(t) }$
for any distribution over the reward sequence. $\overline{\text{Reg}}_{\text{EL}}(T)$ can be re-written as 
\begin{align}
\hspace{-0.05in} \overline{\text{Reg}}_{\text{EL}}(T) &= 
\expect*{  \sum_{t=1}^T  \hspace{-0.05in}    R_{ f^*_{i^*} ( X_{i^*}(t) ) }(t) }
\hspace{-0.05in} - \expect*{  \sum_{t=1}^T  \hspace{-0.05in} R_{ A_{i^*}(t) }(t) } \label{eqn:regEL1}  \\
& +  \expect*{ \sum_{t=1}^T  V_{ i^* }(t) }
- \expect*{ \sum_{t=1}^T R_{\text{EL}}(t) } \label{eqn:regEL2}
\end{align}
since 
$ \expect{ \sum_{t=1}^T  R_{ A_{i^*}(t) }(t) }
 = \expect{ \sum_{t=1}^T  V_{ i^* }(t) }$.
The result follows from bounding \eqref{eqn:regEL1} by using Theorem \ref{thm:CUPregret}, and \eqref{eqn:regEL2} by \eqref{eqn:step1}.

\section{Proof of Theorem \ref{thm:CHregret} }\label{app:CHregret}

Since CH keeps and updates a separate probability distribution over the LLs for each $p \in {\cal P}_{\text{EL}}$, regret given in \eqref{eqn:newregret} can be re-written as
\begin{align}
\text{Reg}_{\text{CEL}}(T) = \sum_{ p \in {\cal P}_{\text{EL}} } \left[
\sum_{l \in {\cal Z}_p(T) }  v_{ i^*_p }(l)
-  \mr{E} \left[ \sum_{l \in {\cal Z}_p(t) } R_{\text{EL}}(l)  \right]      \right]   .     \notag
\end{align}
By Theorem \ref{thm:Ahedgeregret} we obtain
\begin{align}
& \sum_{ l \in {\cal Z}_p(T) } \hspace{-0.15in}   v_{ \orev{i^*_p} }(l)  
- \mr{E} \left[ \sum_{ l \in {\cal Z}_p(T) } \hspace{-0.15in}  R_{\text{EL}}(l)  \right] 
 \leq 2 \sqrt{N_{\text{EL},p}(T) \log M}   . \label{eqn:CEL1}
\end{align}
Using \eqref{eqn:CEL1}, the Cauchy-Schwarz inequality given in Appendix \ref{app:cauchy} and the fact that $\sum_{p \in {\cal P}_{\text{EL}}} N_{\text{EL},p}(T)   = T$ we get
\begin{align}
& \sum_{ p \in {\cal P}_{\text{EL}} } \left[
\sum_{l \in {\cal Z}_p(T) }  v_{ \orev{ i^*_p } }(l)
-  \mr{E} \left[ \sum_{l \in {\cal Z}_p(T) } R_{\text{EL}}(l)  \right]      \right] \notag \\
& \leq  2 \sqrt{\log M} \sum_{p \in {\cal P}_{\text{EL}}} \sqrt{N_{\text{EL},p}(T) } 
\leq 2 \sqrt{ T ( m_{\text{EL}} )^{d_{\text{EL}}}  \log M } . \notag
\end{align}

\vspace{-0.2in}
\section{Concentration Inequality (Appendix A in \cite{russo2014learning})}\label{app:concentration} 

Consider a prediction rule $f$ of LL $i$ for which the rewards are generated by an i.i.d. process $\{ R(t) \}_{t=1}^T$ with $\pi_f = \mr{E} [R(t)]$, where the noise $R(t) - \pi_f$ is bounded in $[-1,1]$. Let $N_f(T) \geq 1$ denote the number of times $f$ is selected by LL $i$ by the end of time $T$. 
Let $\hat{\pi}_{f}(T) = \sum_{t=1}^T \mr{I} (a_i(t) =f ) R(t) / N_f(T)$
For any $\delta > 0$ with probability at least $1-\delta$ we have
\begin{align}
&\left| \hat{\pi}_{f}(T)  - \pi_f \right| \notag \\
& \leq \sqrt{  \frac{2}{N_f(T)} 
\left(       
1 + 2 \log \left(  \frac{ (1 + N_f(T) )^{1/2} } {\delta}    \right)  
 \right)  }  ~~ \forall T \in \mathbb{N}.   \notag
\end{align}

\vspace{-0.2in}
\section{Cauchy-Schwarz Inequality}\label{app:cauchy}

$|<\bs{x}, \bs{y}>| \leq \sqrt{ <\bs{x}, \bs{x}> <\bs{y}, \bs{y} >}$, where $\bs{x}$ and $\bs{y}$ are $D$-dimensional real-valued vectors and $<\cdot,\cdot>$ denotes the standard inner product.

\section{A bound on divergent series} \label{app:seriesbound}
For $\rho>0$, $\rho \neq 1$,
%
$\sum_{t=1}^{T} 1/(t^\rho) \leq 1 + (T^{1-\rho} -1)/(1-\rho)$.
%
\begin{proof}
See \cite{chlebus2009approximate}.
\end{proof}

\bibliographystyle{IEEE}
\bibliography{OSA}

\end{document}